\documentclass{article}
\usepackage[preprint]{neurips_2026}
\usepackage[utf8]{inputenc}
\usepackage[T1]{fontenc}
\usepackage{hyperref}
\usepackage{url}
\usepackage{booktabs}
\usepackage{array}
\usepackage{amsmath,amssymb,amsfonts,amsthm}
\usepackage{graphicx}
\usepackage{algorithm}
\usepackage{algorithmic}
\usepackage{nicefrac}
\usepackage{microtype}
\usepackage{courier}
\usepackage{xcolor}

\newtheorem{fact}{Fact}

\newcommand{\best}[1]{\mathbf{#1}}
\newcommand{\second}[1]{\underline{#1}}
\newcommand{\metriccell}[2]{\begin{tabular}[c]{@{}c@{}}{\normalsize $#1$}\\[-0.35ex]{\tiny$\pm\,#2$}\end{tabular}}
\newcolumntype{L}[1]{>{\raggedright\arraybackslash}p{#1}}

\title{An Information-Theoretic Evaluation Framework for Benchmark and Model Diagnosis in Knowledge Tracing}
\author{%
Houru Jiang$^{1,*}$, Zixi Wang$^{1,*}$, Tengteng Cheng$^2$, Xueyi Li$^2$\\
Mingliang Hou$^{2,\dagger}$, Jiaqi Zheng$^2$, Renqiang Luo$^3$, Teng Guo$^2$, Zitao Liu$^2$\\[0.5ex]
$^1$Jinan University, Guangzhou, China\\
$^2$Guangdong Institute of Smart Education, Jinan University, Guangzhou, China\\
$^3$Jilin University, Changchun, China\\
\texttt{teemohold@outlook.com}\\[0.3ex]
{\small $^*$Equal contribution. $^\dagger$Corresponding author.}
}
\hypersetup{hidelinks,
pdftitle={An Information-Theoretic Evaluation Framework for Benchmark and Model Diagnosis in Knowledge Tracing},
pdfauthor={Houru Jiang, Zixi Wang, Tengteng Cheng, Xueyi Li, Mingliang Hou, Jiaqi Zheng, Renqiang Luo, Teng Guo, Zitao Liu},
pdfsubject={Knowledge tracing; information-theoretic benchmark and model diagnosis},
pdfkeywords={knowledge tracing, context tree weighting, uncertainty, benchmark diagnosis}}

\begin{document}
\maketitle

\begin{abstract}
Knowledge tracing (KT) models are predominantly evaluated using aggregate metrics such as area under the curve (AUC) and accuracy. However, these global scores obscure where the remaining errors originate and fail to indicate whether a benchmark is approaching saturation. While estimating a global theoretical performance limit is challenging in realistic KT settings, it is possible to quantify local predictability. To address this, we propose an information-theoretic evaluation framework for KT benchmark diagnosis. We use Context Tree Weighting (CTW) on item-response histories and current-item queries as an operational causal uncertainty coordinate, while distinguishing it from the unobserved Local Irreducible Uncertainty (LIU) under the full KT information set. By projecting predictions onto this shared uncertainty coordinate, we evaluate model performance gains across distinct entropy bands rather than only at the global level. Comprehensive evaluations on NIPS Task 3/4 and Algebra 2005 reveal that model improvements are highly non-uniform. Modern KT models show substantial gains in high-entropy regions, and additional item-aware references, log-loss, and equal-frequency analyses support this localization. The framework also flags regions where apparent gains require checks for noise-sensitive behavior. By surfacing these local modeling failures alongside genuine gains, this approach provides a diagnostic tool for studying both residual predictive structure and the limitations of current KT benchmarks and models.

\end{abstract}

\section{Introduction}
Knowledge tracing (KT) predicts a student's future responses from their past interaction history, serving as a foundational task in artificial intelligence (AI) for education. As modern KT architectures grow increasingly sophisticated, performance on standard benchmarks continues to climb. However, the evaluation paradigm remains dominated by aggregate metrics such as area under the curve (AUC) and accuracy. These global scores are useful for leaderboard rankings, but they fail to answer two diagnostic questions that matter for progress: \emph{where does meaningful benchmark headroom remain?} and \emph{what kind of errors is a model reducing?} A marginal $+0.01$ boost in global AUC cannot distinguish between a model that refines already-easy cases, one that extracts genuine structure from difficult interactions, and one whose apparent gain is concentrated in unstable regions where outcomes are close to random under the observable protocol.

To address this gap, we propose an \emph{information-theoretic} evaluation framework for KT benchmark and model diagnosis. The framework separates two roles that aggregate evaluation conflates. First, \textbf{uncertainty characterization} establishes a shared local coordinate for prediction difficulty. Because the true Local Irreducible Uncertainty (LIU) is not directly observable in finite KT data, we use Context Tree Weighting (CTW) as an operational causal anchor: it assigns each prediction point a fold-consistent entropy estimate using past item-response history and the current queried item. Second, \textbf{predictive realization} measures how much task-facing KT models and anchor-side neural probes improve on that coordinate. This decoupling lets us ask not only whether a benchmark is moving on average, but also whether gains occur in low-uncertainty regions, high-uncertainty regions with exploitable structure, or high-uncertainty regions that call for item or data inspection.

Our contributions are as follows: (i) We introduce an information-theoretic evaluation paradigm that shifts KT assessment from global leaderboard scores to localized benchmark and model diagnosis; (ii) We formalize the separation between local irreducible uncertainty and realized predictive performance, with CTW serving as an operational causal anchor rather than as the ground-truth LIU; (iii) We provide a fold-consistent empirical protocol based on CTW-anchored entropy bands, gain-over-anchor curves, and item-level remaining-uncertainty/information-gain diagnostics; and (iv) Through broad aligned experiments on NIPS Task 3/4 and Algebra 2005, we show that the largest reliable gains occur in high-entropy regions while also identifying high-uncertainty, low-information-gain cases that warrant further reliability checks, with item-aware anchor and proper-score analyses testing the robustness of the localization.

\section{Related Work}
KT has progressed from Bayesian KT \citep{corbett1994knowledge} to neural sequence models such as DKT \citep{piech2015deep}, memory-based models such as DKVMN \citep{zhang2017dkvmn}, attention-based variants such as SAKT and AKT \citep{pandey2019sakt,ghosh2020akt}, and strong modern baselines such as Deep-IRT and simpleKT \citep{yeung2019deepirt,liu2022simplekt}. Toolkits and benchmarks including ASSISTments, EdNet, the NeurIPS 2020 Education Challenge, and pyKT have improved reproducibility \citep{heffernan2014assistments,choi2019ednet,wang2021neuripsedu,liu2022pykt}. However, these comparisons still primarily report aggregate metrics such as AUC and accuracy.

Our work is closer to probabilistic forecast evaluation. Information-theoretic uncertainty \citep{cover2006elements}, universal coding tools such as CTW \citep{willems1995ctw}, and proper scoring rules such as Brier score and log-loss \citep{brier1950verification,gneiting2007strictly} provide tools for separating uncertainty from realized predictive quality. We use CTW as a common causal anchor and evaluate model gains along entropy bands rather than only at the benchmark level. This makes the contribution diagnostic rather than architectural: the objective is to determine where benchmark progress is reliable, where model capacity changes predictions, and where apparent gains are more consistent with unstable or noise-like item behavior.

\begin{figure}[t]
\centering
\makebox[\linewidth][c]{\includegraphics[width=1\linewidth]{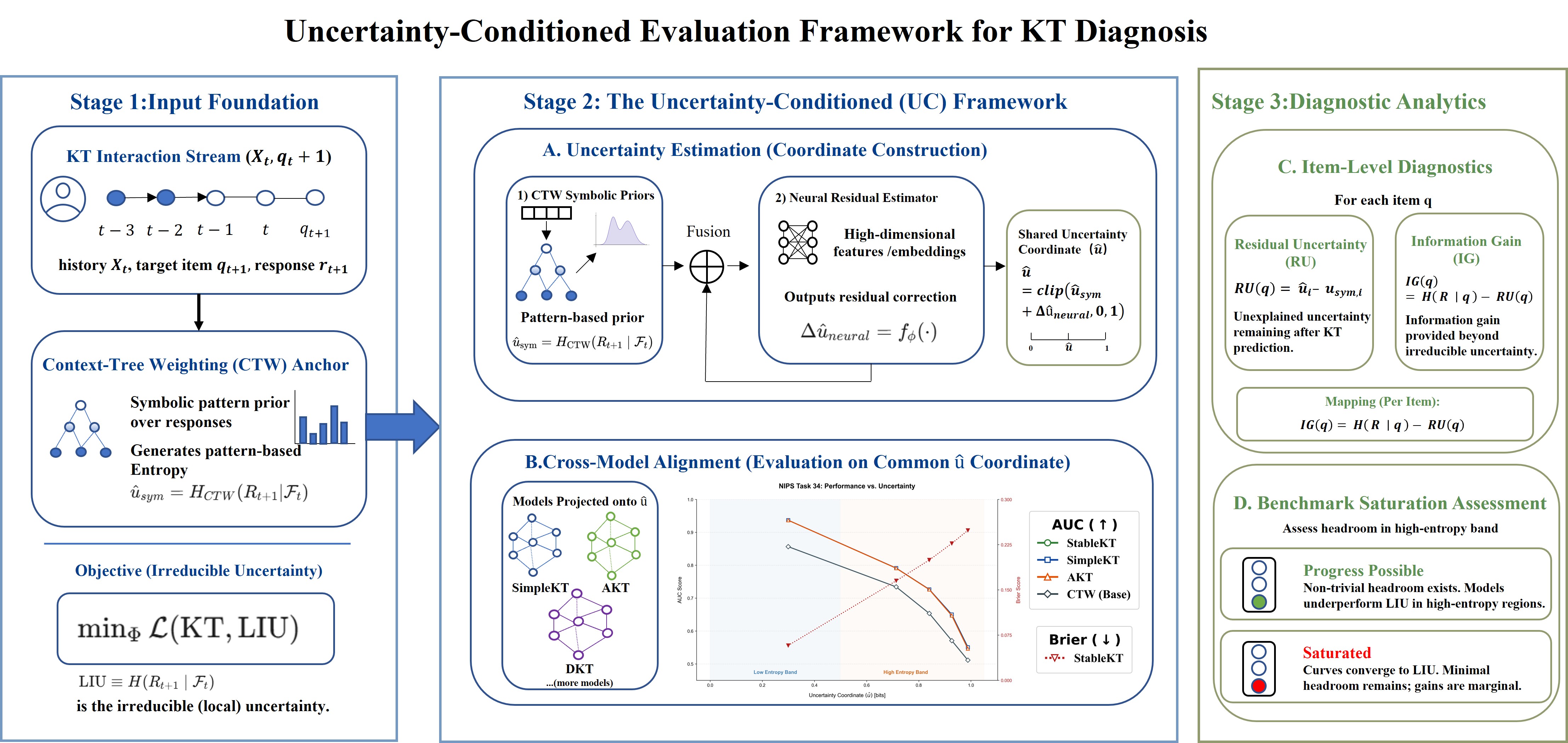}}
\caption{Overview of the proposed information-theoretic evaluation framework. A causal anchor assigns each prediction point to a shared uncertainty coordinate; model performance is then read along this coordinate to reveal where benchmark headroom remains and where model behavior warrants further diagnosis.}
\label{fig:framework-9page}
\end{figure}

\section{Framework}
\subsection{Problem setup and local uncertainty}
In standard KT, we observe a student's interaction history up to time $t$, denoted as $X_t=\{(q_1,r_1),\ldots,(q_t,r_t)\}$, where $q_k$ represents the practiced item and $r_k \in \{0,1\}$ represents the correctness of the response. Given the next item $q_{t+1}$, the task is to predict the probability of a correct response:
\begin{equation}
\hat{p}_{t+1} = \hat{P}(R_{t+1}=1 \mid X_t, q_{t+1}).    
\end{equation}

To formalize the information available at this prediction step, we define the observable filtration as $\mathcal{F}_t = \sigma(X_t, q_{t+1})$. Based on this, we define the \emph{local irreducible uncertainty} (LIU) of the prediction point as the conditional entropy of the true response $R_{t+1}$:
\begin{equation}
\mathrm{LIU}(X_t, q_{t+1}) = H\!\left(P(R_{t+1}\mid X_t,q_{t+1})\right).
\end{equation}
This term represents the theoretical entropy remaining under the admissible KT information boundary, i.e., the inherent unpredictability even if we had a perfect model for the student's historical data. In finite benchmark data, LIU is not directly observed; it must be studied through operational causal estimates that respect the information available at prediction time.

Rather than aggregating all predictions into a single global metric, our framework evaluates models locally. For each prediction instance $i$ across the dataset, we compute an operational CTW uncertainty coordinate $\hat{u}_{\mathrm{ctw},i}$ as a causal proxy for $\mathrm{LIU}_i$, partition the samples into distinct uncertainty buckets $\mathcal{B}_k$, and evaluate the model's performance $\mathrm{Perf}_m(k)$ within each specific bucket.

\subsection{Entropy-risk decomposition and causal anchor}
Let $P_t = P(R_{t+1} \mid \mathcal{F}_t)$ be the true conditional response distribution. For any $\mathcal{F}_t$-measurable probabilistic predictor $\hat{P}_\theta$, the expected log-loss (cross-entropy) naturally decomposes into two components:
\begin{fact}[Local entropy-risk decomposition]
\begin{equation}
\mathbb{E}\big[-\log \hat{P}_\theta(R_{t+1} \mid \mathcal{F}_t)\big]
=\underbrace{H(R_{t+1} \mid \mathcal{F}_t)}_{\text{mean LIU}}
+\underbrace{\mathbb{E}\big[\mathrm{KL}(P_t \,\|\, \hat{P}_\theta)\big]}_{\text{epistemic gap}}.
\end{equation}
\end{fact}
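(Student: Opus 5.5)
The plan is to condition on the observable filtration and prove the identity pointwise, then average. Fix a realization of $\mathcal{F}_t$, i.e.\ a history $X_t$ and query $q_{t+1}$. Both $P_t$ and $\hat{P}_\theta$ are then fixed distributions on $\{0,1\}$, because $\hat{P}_\theta$ is $\mathcal{F}_t$-measurable. By the tower property,
\[
\mathbb{E}\big[-\log \hat{P}_\theta(R_{t+1}\mid\mathcal{F}_t)\big]=\mathbb{E}\Big[\mathbb{E}\big[-\log \hat{P}_\theta(R_{t+1}\mid\mathcal{F}_t)\,\big|\,\mathcal{F}_t\big]\Big],
\]
and the inner conditional expectation equals $\sum_{r\in\{0,1\}} P_t(r)\,\bigl(-\log \hat{P}_\theta(r)\bigr)$. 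This is the cross-entropy of $\hat{P}_\theta$ under $P_t$.

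Next I add and subtract $\log P_t(r)$ inside the sum:
\[
\sum_r P_t(r)\bigl(-\log \hat{P}_\theta(r)\bigr)=\sum_r P_t(r)\bigl(-\log P_t(r)\bigr)+\sum_r P_t(r)\log\frac{P_t(r)}{\hat{P}_\theta(r)} .
\]
The first sum is $H(P_t)$, which is exactly $\mathrm{LIU}(X_t,q_{t+1})$ by its definition above. The second sum is $\mathrm{KL}(P_t\,\|\,\hat{P}_\theta)$. Taking the outer expectation over $\mathcal{F}_t$ gives $\mathbb{E}[H(P_t)]+\mathbb{E}[\mathrm{KL}(P_t\|\hat{P}_\theta)]$. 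The first term is the conditional entropy $H(R_{t+1}\mid\mathcal{F}_t)$ by the standard definition of conditional entropy as averaged pointwise entropy \citep{cover2006elements}. This is the claimed identity, with the ``mean LIU'' label justified.

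The only real obstacle is bookkeeping around degenerate cases, since the algebra itself is routine.

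\emph{Conventions.} I would adopt the usual convention $0\log 0=0$, so that outcomes with $P_t(r)=0$ contribute nothing to either sum.

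\emph{Infinite loss.} If $\hat{P}_\theta(r)=0$ for some $r$ with $P_t(r)>0$, then both the left side and the KL term equal $+\infty$. The identity then holds in $[0,\infty]$. Because every term is nonnegative, the additive split of expectations is legitimate even when infinite, with no cancellation of infinities.

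\emph{Conditional expectation.} I would also note that when $\mathcal{F}_t$ is generated by a discrete (finite-history) variable, the conditional expectation is an elementary weighted sum. The decomposition therefore needs no measure-theoretic subtleties beyond the tower property.
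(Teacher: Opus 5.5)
Your proof is correct and follows the paper's route: condition on $\mathcal F_t$ to write the expected log-loss as an average of the cross-entropy of $\hat P_\theta$ under $P_t$, add and subtract $\log P_t(r)$, and identify the two averages as $H(R_{t+1}\mid\mathcal F_t)$ and $\mathbb E[\mathrm{KL}(P_t\|\hat P_\theta)]$. Your added remarks on the $0\log 0$ convention and on a possibly infinite KL term, handled by nonnegativity in $[0,\infty]$, make explicit what the paper's proof leaves implicit.
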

This decomposition illustrates that standard benchmark-level error inextricably mixes LIU with model mismatch (the epistemic gap). A global score can decrease either because a model reduces epistemic error in a meaningful region or because gains and failures cancel across regions with very different local uncertainty.

All entropies and KL divergences in the theoretical identities use base-2 logarithms, so they are measured in bits. The outer expectation in Fact~1 averages over prediction contexts; the local entropy is the entropy of $P_t$ at a given context.

To study uncertainty operationally, we use Context Tree Weighting (CTW) as an autoregressive causal anchor. In the real-data implementation, its context contains past item-response tokens, encoded as \texttt{I:item|R:resp}, and a query token \texttt{Q:item} for the current item. At prediction step $t+1$, these tokens are constructed from $X_t$ and $q_{t+1}$; the current response is observed only after prediction, and future responses are excluded. Thus item identity is a predictor input, while an explicit Q-matrix or skill-tag representation is not required. The corresponding uncertainty coordinate is $\hat{u}_{\mathrm{ctw},t}=H(p_{\mathrm{ctw}})$. This is a model-dependent estimate under the causal information boundary $\mathcal F_t=\sigma(X_t,q_{t+1})$, not a response-only coordinate or a certified Bayes limit. Gains over CTW can reflect limitations of its finite-depth context model even when predictors share this information boundary. We also test localization under an independent ItemRasch reference that models item difficulty and online ability differently (Appendix~\ref{app:robustness}).

Furthermore, we introduce a fused CTW+NN predictor. This model augments the pure CTW anchor by applying a neural refinement of the form $p_{\mathrm{ctw}} \mapsto \sigma(\mathrm{logit}(p_{\mathrm{ctw}}) + f_\phi(X_t, q_{t+1}))$. In our evaluation protocol, the pure CTW model defines the shared uncertainty axis (i.e., the buckets), while the CTW+NN model acts as an anchor-side neural probe. It shows whether exploitable structural signals still exist beyond the capabilities of the symbolic source model.

\subsection{Operational bands, remaining uncertainty, and information gain}
Given an aligned evaluation set, our empirical protocol proceeds as follows: First, we compute a fold-consistent CTW prediction $p_{\mathrm{ctw},i}$ for each evaluation point $i$. Second, we convert this probability into an estimated entropy score $\hat{u}_{\mathrm{ctw},i} = H(p_{\mathrm{ctw},i})$. Third, we partition the samples into distinct uncertainty buckets (using fixed-width bands in the original analysis and equal-frequency quintiles in the additional robustness analysis). Finally, we report both the absolute performance and the performance gain over the CTW anchor within each bucket. This resulting band view is deliberately model-agnostic: any KT baseline can be evaluated on this exact same CTW-defined coordinate.

The same autoregressive anchor also enables fine-grained, item-level diagnostics. For any given item $q$, let $H_q^{\mathrm{marg}}$ denote the entropy of its marginal empirical correctness, and let $H_q^{\mathrm{ctw}}$ denote its average CTW conditional entropy across all predictions for that item. We define the \emph{remaining uncertainty} as $\mathrm{RU}(q) = H_q^{\mathrm{ctw}}$ and the \emph{information gain} as $\mathrm{IG}(q) = H_q^{\mathrm{marg}} - H_q^{\mathrm{ctw}}$. A high $\mathrm{RU}$ identifies items whose outcomes remain unstable under the current observable protocol (e.g., potentially due to guessing, ambiguous wording, missing observables, or overly coarse step granularity); a high $\mathrm{IG}$ highlights items with lower anchor entropy than item-marginal entropy. This plug-in difference is a diagnostic rather than a guaranteed estimate of mutual information; anchor misspecification can affect either quantity.

\subsection{Benchmark and model diagnosis criteria}
The framework answers three diagnostic questions. First, \textbf{benchmark diagnosis} asks whether high-uncertainty regions with substantial support still admit reliable gains over the causal anchor; if strong models remain separated there, aggregate saturation claims are premature. Second, \textbf{model diagnosis} asks where a model improves and whether those improvements are supported by proper scoring behavior, not only by threshold accuracy. Third, \textbf{noise-sensitive behavior} asks whether apparent improvements occur in high-RU/low-IG regions where the item-conditioned anchor leaves substantial uncertainty. Such movements should not be interpreted as progress unless they are stable across folds and accompanied by proper-score improvements.

\section{Experimental Protocol}
We evaluate five-fold aligned exports on NIPS Task 3/4 and Algebra 2005. ASSIST2015 is reported as a global auxiliary check, while the local diagnostic figures focus on NIPS Task 3/4 and Algebra 2005 because these datasets support the band-level and item-level diagnostics used in the main figures. Each prediction point is assigned a CTW entropy value and placed into one of five entropy bands $[0.0,0.2),[0.2,0.4),[0.4,0.6),[0.6,0.8),[0.8,1.0]$. We report AUC, accuracy, and Brier score globally and within bands. Accuracy gives the decision-level view, while Brier score is used as a proper-score check on whether probability quality improves in the same region.

The model pool includes the CTW anchor, the CTW+NN neural probe, and a broad KT sweep spanning DKT, DKT+, DKT-forget, KQN, SAKT, SAINT, AKT, simpleKT, sparseKT, FoLiBiKT, stableKT, extraKT, Deep-IRT, ATKT, ATKTFix, DIMKT, ATDKT, and HawkesKT \citep{dktplus2018,dktforget2019,kqn2019,saint2020,atkt2021,sparsekt2024,folibikt2023,stablekt2024,extrakt2024,dimkt2022,atdkt2023,hawkes2021}. The main bucket coordinate is always computed from CTW alone, so task-facing KT baselines do not define the axis on which they are judged. Model diagnosis is then based on the alignment between gain-over-anchor curves, proper-score changes, fold consistency, and item-level RU/IG signals. The additional robustness analysis uses a single SimpleKT residual probe, denoted CTW+NN-SKT, on a separate common evaluation cohort; it is distinct from the original reported CTW+NN export. Appendix~\ref{app:robustness} reports its log-loss, calibration, item-aware reference, and quintile results separately.

\section{Results}
\subsection{Sanity check: same marginal correctness, different causal uncertainty}
Before real benchmarks, we verify the property needed from the anchor. We construct two binary response regimes with nearly identical marginal correctness rates. The first is an i.i.d. Bernoulli$(0.5)$ process. The second is a symmetric persistent two-state Markov process: the initial state is uniform over $\{0,1\}$, and each subsequent response copies the previous response with probability $0.95$ and flips with probability $0.05$. This process has stationary marginal correctness $0.5$ but forms long correct/incorrect runs. Across $64$ sequences of length $400$, the observed correctness rates are $0.492$ and $0.495$, yet average CTW entropy is $0.986$ bits for the i.i.d. regime and $0.318$ bits for the persistent regime. CTW predictive accuracy is $0.501$ versus $0.945$.

\begin{figure}[t]
\centering
\includegraphics[width=0.86\linewidth]{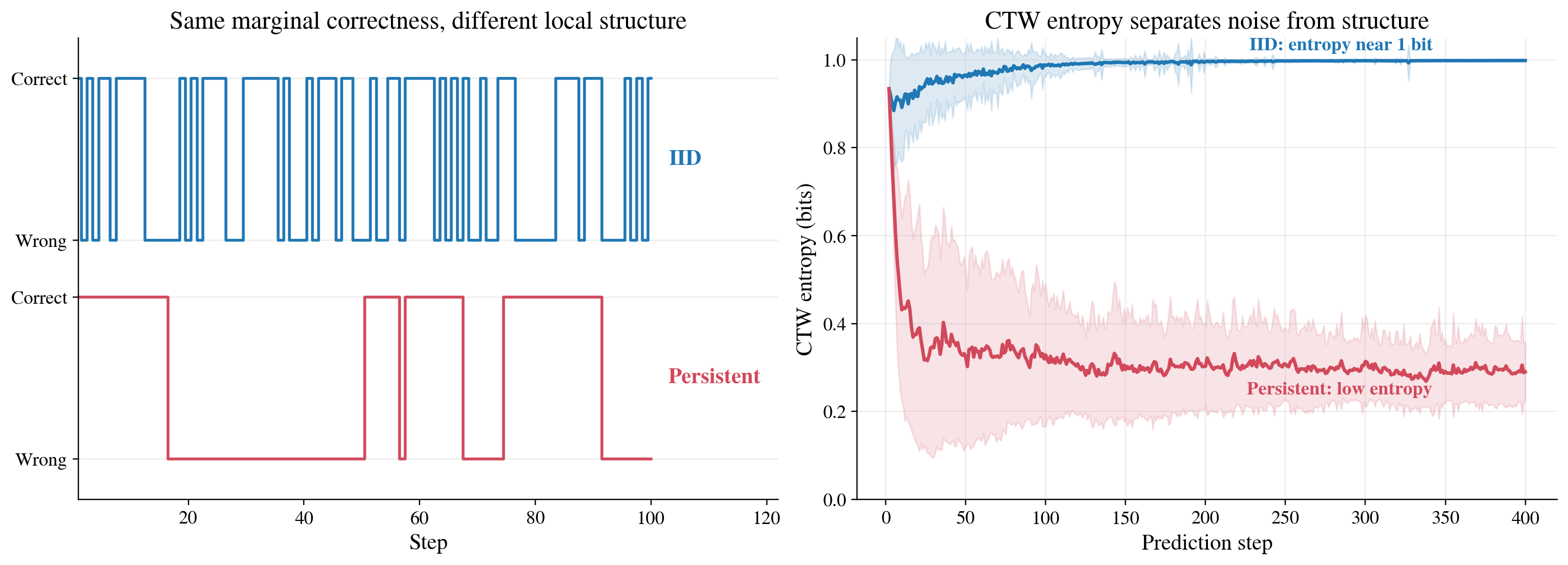}
\caption{Synthetic sanity check. Two regimes have nearly identical marginal correctness near $0.5$, but CTW separates noise-like responses from run-structured responses with strong causal predictability.}
\label{fig:same-acc-sanity-9page}
\end{figure}

\subsection{Global aligned performance is insufficient for diagnosis}
Global aligned performance is used as context rather than as the main diagnostic target. Across the aligned folds, CTW+NN improves the CTW anchor on all three datasets while remaining an anchor-side neural probe rather than the definition of the uncertainty axis: $+0.0823$ AUC and $+0.0690$ accuracy on NIPS Task 3/4, $+0.0835$ AUC and $+0.0160$ accuracy on Algebra 2005, and $+0.0455$ AUC and $+0.0142$ accuracy on ASSIST2015. The task-facing KT sweep is reported in Table~\ref{tab:overall-kt-9page}. These numbers show that stronger predictors improve on average, but they do not reveal whether the improvement comes from already predictable regions, genuinely difficult but structured regions, or unstable regions where apparent gains require caution. The central claim is local: global scores provide a sanity check, while the entropy-conditioned bands below test where benchmark headroom and model limitations actually lie.

\begin{table*}[t]
\caption{Task-facing KT baselines across the three benchmark datasets. Each metric cell is formatted as a two-line entry with the mean emphasized and the standard deviation reduced for readability. Entries are mean $\pm$ standard deviation over five folds; lower Brier is better. Best entries are bolded and second-best entries are underlined within each dataset/metric block. The unified table keeps only the KT models available on all three datasets. All NIPS Task 3/4 rows use the shared aligned subset of $1{,}354{,}735$ points. For Algebra 2005, all rows use $464{,}725$ points except ATKT ($271{,}122$) and ATKTFix ($278{,}835$). For ASSIST2015 under the convention $concept=question$, all rows use $672{,}960$ points.}
\label{tab:overall-kt-9page}
\centering
\scriptsize
\setlength{\tabcolsep}{3pt}
\begin{tabular}{lccccccccc}
\toprule
& \multicolumn{3}{c}{NIPS Task 3/4} & \multicolumn{3}{c}{Algebra 2005} & \multicolumn{3}{c}{ASSIST2015} \\
\cmidrule(lr){2-4} \cmidrule(lr){5-7} \cmidrule(lr){8-10}
Model & AUC & ACC & Brier & AUC & ACC & Brier & AUC & ACC & Brier \\
\midrule
simpleKT & \metriccell{\best{0.8014}}{0.0003} & \metriccell{\second{0.7309}}{0.0004} & \metriccell{\second{0.1798}}{0.0005} & \metriccell{0.8147}{0.0225} & \metriccell{0.8065}{0.0072} & \metriccell{0.1389}{0.0060} & \metriccell{0.7231}{0.0012} & \metriccell{0.7502}{0.0006} & \metriccell{0.1713}{0.0003} \\
stableKT & \metriccell{\second{0.8011}}{0.0005} & \metriccell{\best{0.7310}}{0.0003} & \metriccell{\best{0.1796}}{0.0003} & \metriccell{0.8200}{0.0181} & \metriccell{0.8070}{0.0063} & \metriccell{0.1365}{0.0052} & \metriccell{0.7256}{0.0010} & \metriccell{0.7506}{0.0002} & \metriccell{0.1707}{0.0002} \\
AKT & \metriccell{0.7999}{0.0005} & \metriccell{0.7292}{0.0008} & \metriccell{0.1805}{0.0004} & \metriccell{\second{0.8277}}{0.0015} & \metriccell{\second{0.8110}}{0.0015} & \metriccell{\second{0.1334}}{0.0007} & \metriccell{0.7262}{0.0008} & \metriccell{0.7519}{0.0008} & \metriccell{0.1708}{0.0002} \\
FoLiBiKT & \metriccell{0.7993}{0.0008} & \metriccell{0.7291}{0.0010} & \metriccell{0.1809}{0.0002} & \metriccell{0.8026}{0.0141} & \metriccell{0.8017}{0.0049} & \metriccell{0.1413}{0.0043} & \metriccell{0.7261}{0.0013} & \metriccell{\second{0.7520}}{0.0004} & \metriccell{0.1707}{0.0004} \\
extraKT & \metriccell{0.7982}{0.0020} & \metriccell{0.7285}{0.0016} & \metriccell{0.1810}{0.0011} & \metriccell{0.8215}{0.0029} & \metriccell{0.8064}{0.0018} & \metriccell{0.1365}{0.0011} & \metriccell{0.7257}{0.0004} & \metriccell{0.7506}{0.0002} & \metriccell{0.1712}{0.0001} \\
sparseKT-soft & \metriccell{0.7979}{0.0002} & \metriccell{0.7276}{0.0003} & \metriccell{0.1813}{0.0002} & \metriccell{0.8155}{0.0009} & \metriccell{0.8031}{0.0011} & \metriccell{0.1395}{0.0006} & \metriccell{0.7206}{0.0013} & \metriccell{0.7494}{0.0004} & \metriccell{0.1718}{0.0003} \\
sparseKT-topk & \metriccell{0.7974}{0.0010} & \metriccell{0.7271}{0.0008} & \metriccell{0.1814}{0.0004} & \metriccell{0.8137}{0.0012} & \metriccell{0.8020}{0.0009} & \metriccell{0.1407}{0.0002} & \metriccell{0.7161}{0.0066} & \metriccell{0.7490}{0.0024} & \metriccell{0.1728}{0.0016} \\
ATKT & \metriccell{0.7776}{0.0062} & \metriccell{0.7086}{0.0045} & \metriccell{0.1904}{0.0026} & \metriccell{\best{0.8373}}{0.0229} & \metriccell{\best{0.8179}}{0.0189} & \metriccell{\best{0.1280}}{0.0128} & \metriccell{\best{0.8069}}{0.0100} & \metriccell{\best{0.7787}}{0.0062} & \metriccell{\best{0.1494}}{0.0035} \\
DKT+ & \metriccell{0.7660}{0.0006} & \metriccell{0.7010}{0.0007} & \metriccell{0.1947}{0.0003} & \metriccell{0.8098}{0.0016} & \metriccell{0.8078}{0.0006} & \metriccell{0.1372}{0.0005} & \metriccell{\second{0.7283}}{0.0007} & \metriccell{0.7510}{0.0004} & \metriccell{\second{0.1702}}{0.0002} \\
DKT & \metriccell{0.7659}{0.0002} & \metriccell{0.7005}{0.0004} & \metriccell{0.1948}{0.0001} & \metriccell{0.8091}{0.0007} & \metriccell{0.8072}{0.0005} & \metriccell{0.1374}{0.0004} & \metriccell{0.7267}{0.0005} & \metriccell{0.7506}{0.0006} & \metriccell{0.1706}{0.0001} \\
KQN & \metriccell{0.7626}{0.0011} & \metriccell{0.6982}{0.0006} & \metriccell{0.1961}{0.0005} & \metriccell{0.7883}{0.0062} & \metriccell{0.7985}{0.0020} & \metriccell{0.1437}{0.0017} & \metriccell{0.7250}{0.0007} & \metriccell{0.7500}{0.0003} & \metriccell{0.1709}{0.0001} \\
SAINT & \metriccell{0.7621}{0.0303} & \metriccell{0.6964}{0.0258} & \metriccell{0.1975}{0.0105} & \metriccell{0.7660}{0.0133} & \metriccell{0.7780}{0.0031} & \metriccell{0.1607}{0.0039} & \metriccell{0.7007}{0.0020} & \metriccell{0.7441}{0.0015} & \metriccell{0.1765}{0.0009} \\
Deep-IRT & \metriccell{0.7599}{0.0029} & \metriccell{0.6951}{0.0025} & \metriccell{0.1971}{0.0013} & \metriccell{0.7949}{0.0020} & \metriccell{0.8006}{0.0007} & \metriccell{0.1421}{0.0005} & \metriccell{0.7185}{0.0003} & \metriccell{0.7501}{0.0004} & \metriccell{0.1719}{0.0001} \\
ATKTFix & \metriccell{0.7599}{0.0032} & \metriccell{0.6956}{0.0024} & \metriccell{0.1973}{0.0012} & \metriccell{0.7841}{0.0051} & \metriccell{0.7941}{0.0035} & \metriccell{0.1453}{0.0017} & \metriccell{0.7213}{0.0035} & \metriccell{0.7485}{0.0013} & \metriccell{0.1719}{0.0008} \\
SAKT & \metriccell{0.7481}{0.0004} & \metriccell{0.6844}{0.0010} & \metriccell{0.2031}{0.0011} & \metriccell{0.7867}{0.0037} & \metriccell{0.7964}{0.0016} & \metriccell{0.1477}{0.0022} & \metriccell{0.7112}{0.0004} & \metriccell{0.7472}{0.0001} & \metriccell{0.1756}{0.0007} \\
\bottomrule
\end{tabular}
\end{table*}

\subsection{Benchmark diagnosis through entropy bands}
The main empirical pattern is local rather than global. On NIPS Task 3/4, the accuracy improvement of CTW+NN over CTW grows from $+0.0026$ in the lowest entropy band to $+0.0886$ in the highest. On Algebra 2005, gains are small in lower bands but reach $+0.0283$ accuracy and $0.0681$ Brier reduction in the highest band. Moreover, the highest band is not a tiny tail: it contains roughly $74.8\%$ of NIPS Task 3/4 samples and $55.6\%$ of Algebra 2005 samples in the shared aligned subsets. These high-entropy regions are therefore central to the benchmark rather than peripheral outliers.

\begin{figure*}[t]
\centering
\includegraphics[width=0.98\textwidth]{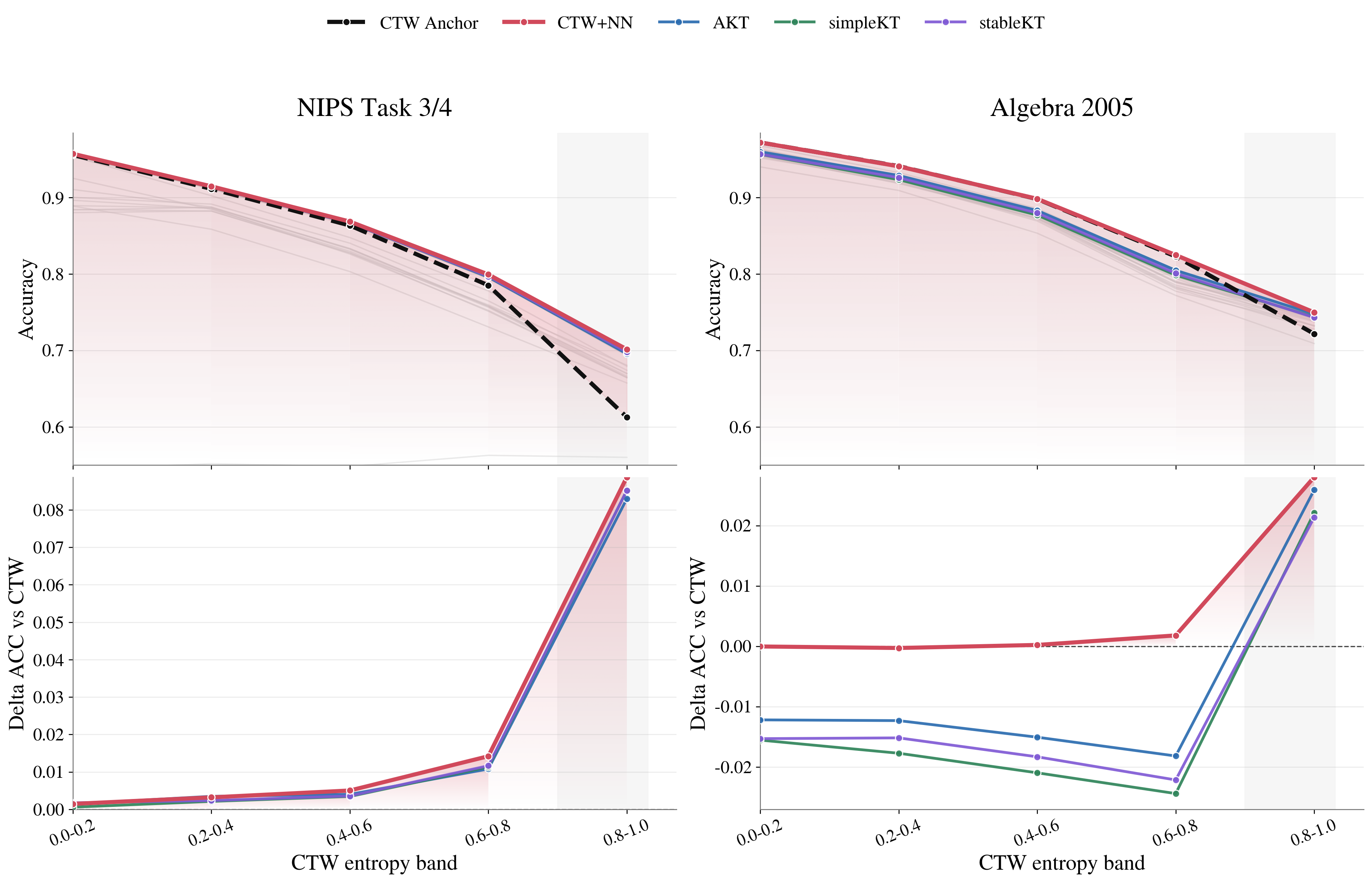}
\caption{Entropy-conditioned accuracy bands under the CTW anchor. Top: absolute accuracy by CTW entropy band. Bottom: accuracy gain relative to CTW. Higher CTW-entropy bands have lower accuracy, but the largest gains over CTW also occur in these highest-entropy regions, indicating difficult but not exhausted benchmark regions.}
\label{fig:band-accuracy-9page}
\end{figure*}

\begin{table}[t]
\caption{CTW+NN minus CTW improvement by CTW entropy band. Positive $\Delta$ACC and negative $\Delta$Brier indicate gains over the anchor.}
\label{tab:band-gains-9page}
\centering
\small
\begin{tabular}{lrrrr}
\toprule
\setlength{\tabcolsep}{3pt}
Band & \multicolumn{2}{c}{NIPS Task 3/4} & \multicolumn{2}{c}{Algebra 2005} \\
\cmidrule(lr){2-3}\cmidrule(lr){4-5}
 & $\Delta$ACC & $\Delta$Brier & $\Delta$ACC & $\Delta$Brier \\
\midrule
$[0.0,0.2)$ & $+0.0026$ & $-0.0075$ & $-0.0002$ & $-0.0006$ \\
$[0.2,0.4)$ & $+0.0037$ & $-0.0144$ & $+0.0002$ & $-0.0019$ \\
$[0.4,0.6)$ & $+0.0055$ & $-0.0161$ & $-0.0001$ & $-0.0041$ \\
$[0.6,0.8)$ & $+0.0147$ & $-0.0229$ & $+0.0021$ & $-0.0087$ \\
$[0.8,1.0]$ & $\mathbf{+0.0886}$ & $\mathbf{-0.0378}$ & $\mathbf{+0.0283}$ & $\mathbf{-0.0681}$ \\
\bottomrule
\end{tabular}
\end{table}

\subsection{Model diagnosis through gain profiles}
The band profile should be read as a diagnostic object rather than as a replacement for global model ranking. The absolute-accuracy row in Figure~\ref{fig:band-accuracy-9page} answers a first-order question: does the CTW coordinate actually stratify prediction difficulty? In both main datasets, the answer is yes. Accuracy falls as CTW entropy increases, and this trend is visible not only for the CTW anchor but also for the KT baselines. This matters because the coordinate is computed without access to future labels or to the task-facing KT model being evaluated. It therefore provides a common, causal axis on which different predictors can be compared.

The delta row answers a second-order question: where is there still exploitable headroom relative to the anchor? If a benchmark were nearly exhausted under the current observable protocol, then high-entropy regions should show little remaining separation among strong models. That is not the observed pattern. On NIPS Task 3/4, the CTW+NN neural probe barely moves the lowest-entropy band but gains almost nine accuracy points in the highest band. The main KT baselines follow the same qualitative profile. This shows that the difficult region is not merely random noise: it is difficult for the symbolic anchor, but still contains structure that stronger predictors can exploit.

Algebra 2005 shows a weaker low-band pattern but the same high-band conclusion. In the first four entropy bands, CTW+NN is close to CTW in accuracy, while several KT baselines can sit below the anchor depending on coverage and model family. In the highest band, however, CTW+NN separates clearly, and the Brier reduction is largest. This is exactly the regime where a local diagnostic is more informative than a global score: aggregate performance averages together easy regions that are already largely solved and high-uncertainty regions where model choice still matters.

This interpretation also clarifies why CTW+NN is included. It is not used to define the benchmark axis, and it is not presented as a new KT leaderboard method. Its role is to act as an anchor-side neural probe: if a simple neural refinement of the causal anchor improves most in high-entropy regions, then those regions should not be dismissed as irreducible noise. They are hard, but not exhausted. By contrast, flat delta curves in high-entropy bands across strong models would indicate a more saturated benchmark under the same observable protocol.

The same profile is also a model-diagnosis tool, but the roles must remain separate: CTW+NN is an anchor-side probe for remaining signal, not a task-facing KT baseline. For task-facing models, a gain is most credible when threshold accuracy and Brier score move in the same direction within the same region. Isolated accuracy changes in high-entropy regions should be inspected rather than immediately counted as progress, because high entropy can arise from both exploitable hidden structure and effectively random variation under the observed features. This is why the framework pairs band-level gains with item-level RU/IG diagnostics: the band profile identifies where a model changes behavior, while RU/IG asks whether that region contains contextual signal or unstable item behavior.

\subsection{Sensitivity and noise-sensitive item diagnostics}
Two additional checks support the same interpretation. First, repartitioning samples by a refined uncertainty coordinate $\hat u_{refined}=H(\hat p_{fused})$ preserves the non-uniform headroom profile, while smoothing the NIPS gain curve because the fused predictor already absorbs part of the structure left unresolved by CTW. On Algebra 2005, higher refined-uncertainty buckets still exhibit larger accuracy gains over the CTW anchor; on NIPS Task 3/4, the peak moves from the last bucket to the upper-middle region, consistent with the refined coordinate already encoding part of the additional signal.

\begin{figure}[t]
\centering
\includegraphics[width=0.88\linewidth]{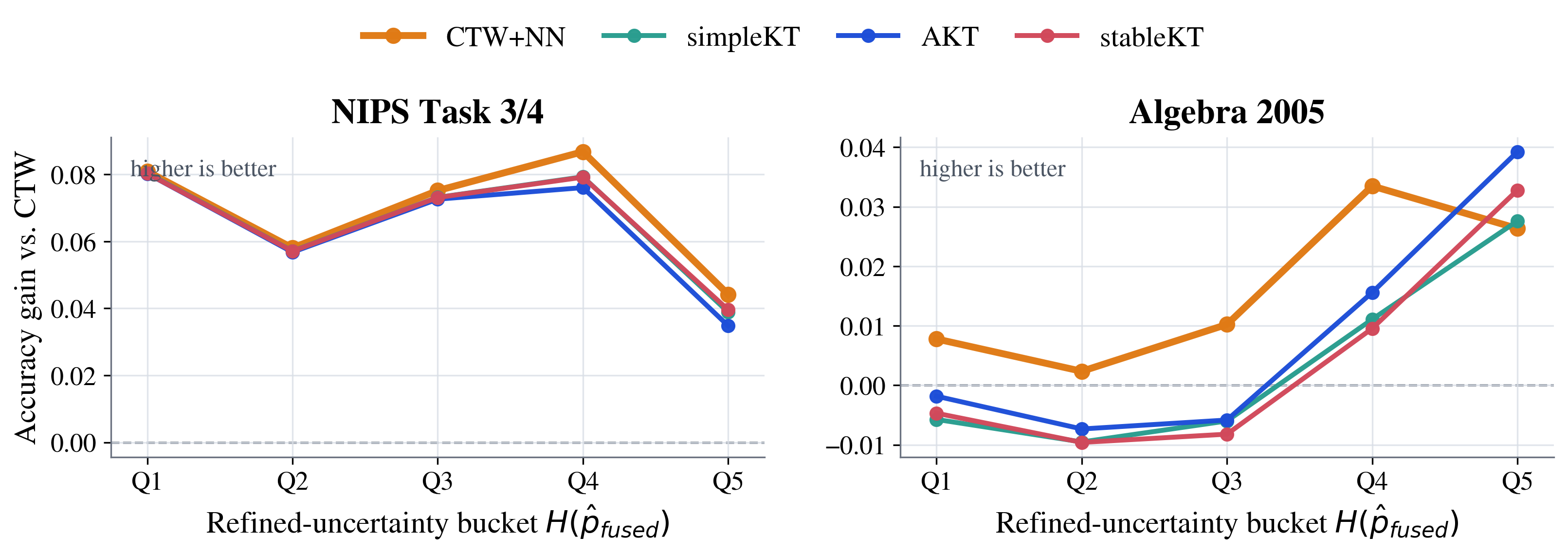}
\caption{Sensitivity analysis using the refined uncertainty coordinate $\hat u_{refined}=H(\hat p_{fused})$. The compact view focuses on accuracy gains relative to the CTW anchor, the primary headroom signal used in the band analysis. The refined coordinate preserves a non-uniform gain profile while avoiding the visual clutter of duplicating absolute-metric curves.}
\label{fig:refined-9page}
\end{figure}

Second, item-level RU and IG separate item regions whose outcomes remain unstable from regions where history carries strong predictive signal. This is the main mechanism by which the framework surfaces noise-sensitive model behavior. High-RU/low-IG items motivate inspection for ambiguity, guessing, missing observables, or problematic granularity; if a model's apparent improvement is concentrated in such items without proper-score or fold-stability support, the improvement is a warning signal for unstable prediction behavior; it does not by itself establish that the model has learned random noise. High-IG items, in contrast, point to regions where better temporal or prerequisite modeling should pay off. Figure~\ref{fig:item-9page} shows this support-filtered diagnostic map.

\begin{figure}[t]
\centering
\includegraphics[width=0.82\linewidth]{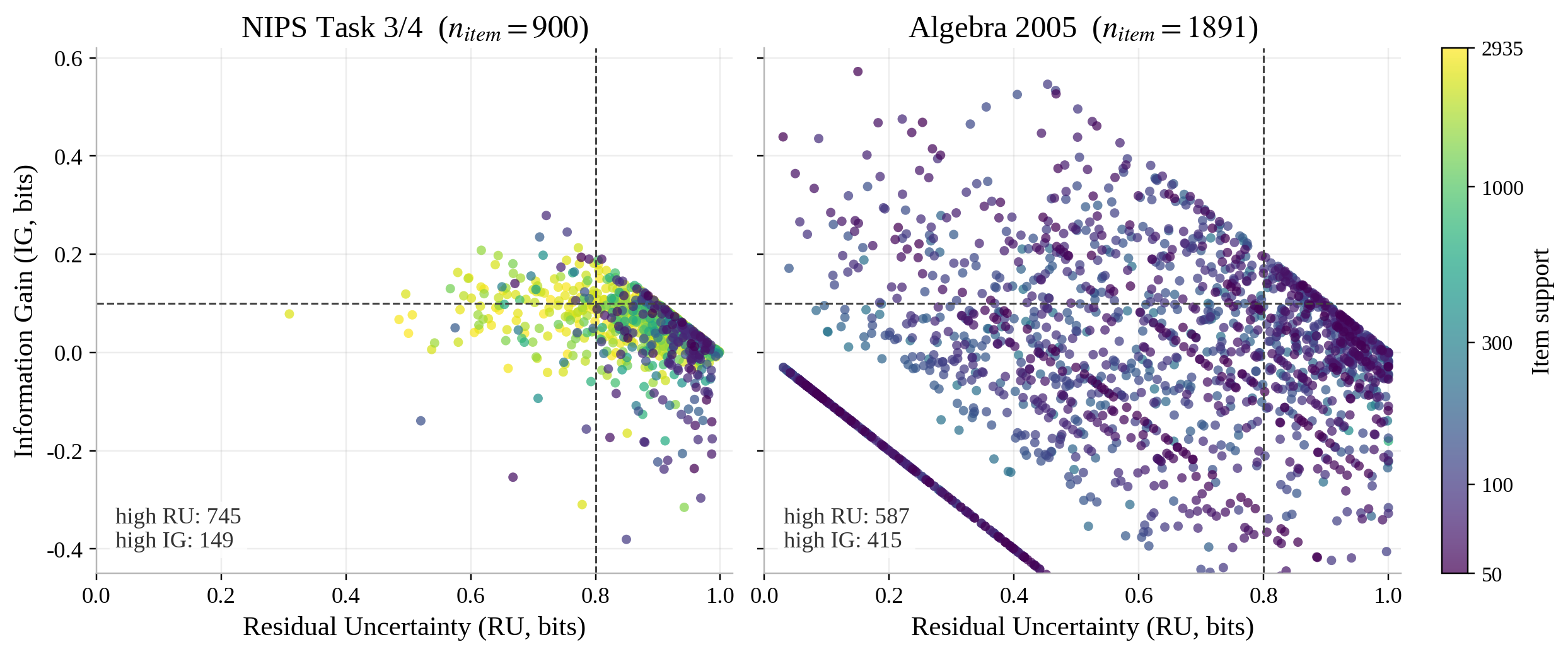}
\caption{Item-level diagnostic map under the CTW anchor. RU captures remaining item uncertainty and IG captures contextual signal beyond item marginal correctness. High-RU/low-IG regions are the main warning cases for noise-sensitive model behavior, while high-IG regions indicate contextual structure that models can exploit.}
\label{fig:item-9page}
\end{figure}

\begin{figure}[t]
\centering
\includegraphics[width=\linewidth]{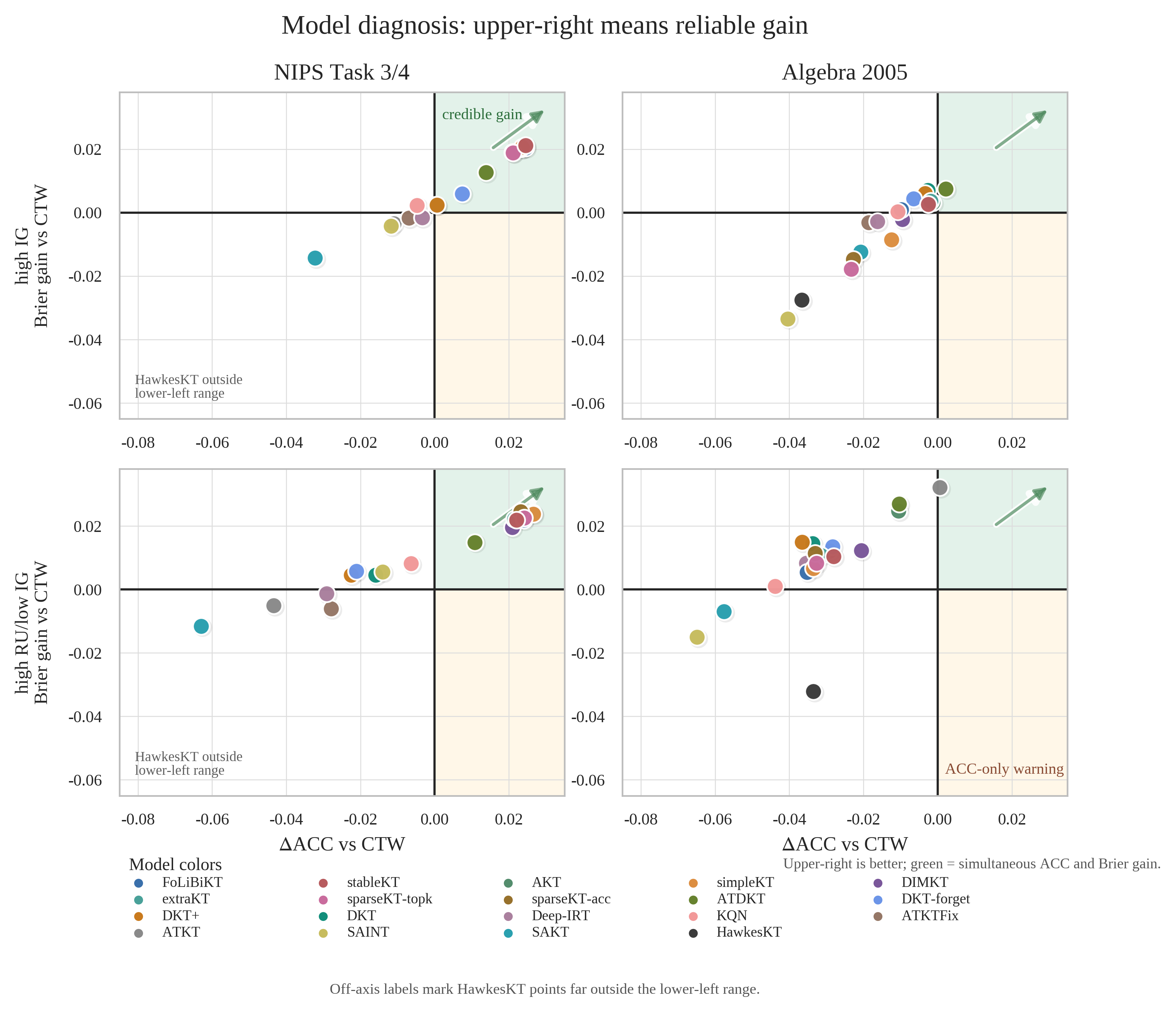}
\caption{Model diagnosis scatter plot for task-facing KT baselines. Columns separate datasets and rows separate RU/IG regions. Each point is one KT model, with color identifying the model. The horizontal axis reports $\Delta$ACC versus CTW and the vertical axis reports Brier gain ($-\Delta$Brier) versus CTW, so farther to the upper-right is better. Points in the green upper-right quadrant improve both threshold accuracy and probability quality; points moving right but not up indicate accuracy-only gains that require caution.}
\label{fig:model-diagnosis-ruig}
\end{figure}

The RU/IG view makes the distinction concrete. High-RU/low-IG regions have high anchor entropy relative to item-marginal entropy, so apparent model gains on them require proper-score and fold-stability checks before being treated as progress. Figure~\ref{fig:model-diagnosis-ruig} compares all task-facing baselines with a direct two-score diagnostic: the $x$-coordinate asks whether accuracy improves over CTW, the $y$-coordinate asks whether the proper score improves over CTW, and the green quadrant marks gains supported by both criteria. The two-score view converts the RU/IG map from an item-level warning signal into a model-level reliability check. The strongest NIPS models cluster in the green quadrant for high-IG regions, whereas Algebra and high-RU/low-IG panels contain more leftward or low-$y$ points. Thus uncertainty bands identify \emph{where} headroom remains, while RU/IG distinguishes exploitable structure from unstable item behavior.

\paragraph{Robustness to the reference and evaluation resolution.}
An independent ItemRasch reference uses an empirical-Bayes item estimate and a causal online student state, providing an alternative to CTW's item-token context model. In its highest-entropy band, CTW+NN-SKT improves accuracy by $+0.0420$ on NIPS and $+0.0788$ on Algebra, with favorable Brier and log-loss changes and five-fold confidence intervals. On the CTW coordinate, highest-band log-loss changes are $-0.0902$ and $-0.1578$ nats, respectively. Equal-frequency quintiles resolve gradients inside the large fixed-width band, and more than $93\%$ of high-CTW-entropy points occur after interaction 20. These checks support localization across alternative item-conditioned reference models and beyond cold start; full results and protocol distinctions appear in Appendix~\ref{app:robustness}.

\section{Discussion and Limitations}
The results support information-theoretic reporting as a benchmark- and model-diagnosis tool. A benchmark close to saturation should show tight clustering of strong models especially in high-uncertainty regions. We do not observe that pattern for NIPS Task 3/4 or Algebra 2005: the hardest CTW-defined regions still contain the largest gains. This suggests that aggregate KT scores hide substantial local structure and that residual structure remains under the tested references and evaluation protocols; it does not quantify distance to the Bayes limit.

CTW is used here as a causal anchor and common analysis coordinate, not as a complete model of student learning. This localizes where headroom remains while avoiding the confound that would arise if every model defined its own uncertainty axis. The refined-coordinate and Brier checks reduce the risk that the conclusion is an artifact of CTW-only bucketization or threshold accuracy. At the same time, the framework avoids treating every high-entropy gain as progress: high-RU/low-IG items expose regions where model changes should be interpreted as noise-sensitive unless supported by proper-score improvement and fold stability.

The analysis has several scope boundaries. The operational coordinates are model-dependent estimates, and their entropy is not a certified lower bound on achievable error. Five-fold intervals describe variation across the retained folds; overlapping training sets limit interpretations based on independent-fold assumptions. The additional CTW+NN-SKT analysis uses a distinct common cohort and must not be numerically pooled with the original reported branch. First, alternative uncertainty coordinates may reveal complementary structure. Second, some raw KT exports have dataset-specific coverage differences, so the main claims emphasize shared aligned subsets. Third, Figure~\ref{fig:model-diagnosis-ruig} reports model diagnosis at the RU/IG-region level, while the paper reports item-level RU/IG summaries for NIPS Task 3/4 and Algebra 2005.

\section{Conclusion}
We introduced an information-theoretic evaluation framework for KT benchmark and model diagnosis. By separating uncertainty characterization from predictive realization and using CTW entropy as a shared causal coordinate, the framework shows where model progress occurs. Across broad aligned sweeps, the largest gains concentrate in high-uncertainty regions, demonstrating that these benchmarks contain substantial exploitable structure beyond what aggregate scores reveal.

The practical message is that KT benchmarks should not be judged only by aggregate score movement. A small global improvement can be meaningful if it occurs in a difficult, high-uncertainty region with substantial support and proper-score improvement; conversely, gains concentrated in high-RU/low-IG regions may reflect noise-sensitive behavior. A minimal adoption path is straightforward: report global metrics, add entropy-conditioned gain curves on a fixed causal coordinate, and inspect RU/IG regions when the curve shows remaining headroom.

\begin{ack}
This work received no funding. The authors declare no competing interests.
\end{ack}

\clearpage
\bibliographystyle{plainnat}
\bibliography{refs}

@article{corbett1994knowledge,
	title = {Knowledge tracing: {Modeling} the acquisition of procedural knowledge},
	volume = {4},
	issn = {1573-1391},

url = {https://doi.org/10.1007/BF01099821},
	doi = {10.1007/BF01099821},

language = {en},
	number = {4},

journal = {User Modeling and User-Adapted Interaction},
	author = {Corbett, Albert T. and Anderson, John R.},
	month = dec,
	year = {1994},

pages = {253--278},
}

@inproceedings{piech2015deep,
 author = {Piech, Chris and Bassen, Jonathan and Huang, Jonathan and Ganguli, Surya and Sahami, Mehran and Guibas, Leonidas and Sohl-Dickstein, Jascha},
 booktitle = {Advances in Neural Information Processing Systems},
 editor = {C. Cortes and N. Lawrence and D. Lee and M. Sugiyama and R. Garnett},
 pages = {},
 publisher = {Curran Associates, Inc.},
 title = {Deep Knowledge Tracing},
 url = {https://proceedings.neurips.cc/paper_files/paper/2015/file/bac9162b47c56fc8a4d2a519803d51b3-Paper.pdf},
 volume = {28},
 year = {2015}
}

@inproceedings{zhang2017dkvmn,
author = {Zhang, Jiani and Shi, Xingjian and King, Irwin and Yeung, Dit-Yan},
title = {Dynamic Key-Value Memory Networks for Knowledge Tracing},
year = {2017},
isbn = {9781450349130},
publisher = {International World Wide Web Conferences Steering Committee},
address = {Republic and Canton of Geneva, CHE},
url = {https://doi.org/10.1145/3038912.3052580},
doi = {10.1145/3038912.3052580},

booktitle = {Proceedings of the 26th International Conference on World Wide Web},
pages = {765--774},
numpages = {10},

location = {Perth, Australia},
series = {WWW '17}
}

@article{pandey2019sakt,
  author       = {Shalini Pandey and
                  George Karypis},
  title        = {A Self-Attentive model for Knowledge Tracing},
  journal      = {CoRR},
  volume       = {abs/1907.06837},
  year         = {2019},
  url          = {http://arxiv.org/abs/1907.06837},
  eprinttype   = {arXiv},
  eprint       = {1907.06837},

}

@inproceedings{ghosh2020akt,
author = {Ghosh, Aritra and Heffernan, Neil and Lan, Andrew S.},
title = {Context-Aware Attentive Knowledge Tracing},
year = {2020},
isbn = {9781450379984},
publisher = {Association for Computing Machinery},
address = {New York, NY, USA},
url = {https://doi.org/10.1145/3394486.3403282},
doi = {10.1145/3394486.3403282},

booktitle = {Proceedings of the 26th ACM SIGKDD International Conference on Knowledge Discovery \& Data Mining},
pages = {2330--2339},
numpages = {10},

location = {Virtual Event, CA, USA},
series = {KDD '20}
}

@misc{liu2022simplekt,
      title={simpleKT: A Simple But Tough-to-Beat Baseline for Knowledge Tracing}, 
      author={Zitao Liu and Qiongqiong Liu and Jiahao Chen and Shuyan Huang and Weiqi Luo},
      year={2023},
      eprint={2302.06881},
      archivePrefix={arXiv},
      primaryClass={cs.LG},
      url={https://arxiv.org/abs/2302.06881}, 
}

@misc{yeung2019deepirt,
      title={Deep-IRT: Make Deep Learning Based Knowledge Tracing Explainable Using Item Response Theory}, 
      author={Chun-Kit Yeung},
      year={2019},
      eprint={1904.11738},
      archivePrefix={arXiv},
      primaryClass={cs.LG},
      url={https://arxiv.org/abs/1904.11738}, 
}

@article{heffernan2014assistments,
	title = {The {ASSISTments} {Ecosystem}: {Building} a {Platform} that {Brings} {Scientists} and {Teachers} {Together} for {Minimally} {Invasive} {Research} on {Human} {Learning} and {Teaching}},
	volume = {24},
	issn = {1560-4306},

url = {https://doi.org/10.1007/s40593-014-0024-x},
	doi = {10.1007/s40593-014-0024-x},

language = {en},
	number = {4},

journal = {International Journal of Artificial Intelligence in Education},
	author = {Heffernan, Neil T. and Heffernan, Cristina Lindquist},
	month = dec,
	year = {2014},

pages = {470--497},
}

@inproceedings{choi2019ednet,
	address = {Cham},
	title = {{EdNet}: {A} {Large}-{Scale} {Hierarchical} {Dataset} in {Education}},
	isbn = {9783030522407},

doi = {10.1007/978-3-030-52240-7_13},

language = {en},
	booktitle = {Artificial {Intelligence} in {Education}},
	publisher = {Springer International Publishing},
	author = {Choi, Youngduck and Lee, Youngnam and Shin, Dongmin and Cho, Junghyun and Park, Seoyon and Lee, Seewoo and Baek, Jineon and Bae, Chan and Kim, Byungsoo and Heo, Jaewe},
	editor = {Bittencourt, Ig Ibert and Cukurova, Mutlu and Muldner, Kasia and Luckin, Rose and Millán, Eva},
	year = {2020},

pages = {69--73},
}

@misc{wang2021neuripsedu,
      title={Results and Insights from Diagnostic Questions: The NeurIPS 2020 Education Challenge}, 
      author={Zichao Wang and Angus Lamb and Evgeny Saveliev and Pashmina Cameron and Yordan Zaykov and Jose Miguel Hernandez-Lobato and Richard E. Turner and Richard G. Baraniuk and Craig Barton and Simon Peyton Jones and Simon Woodhead and Cheng Zhang},
      year={2021},
      eprint={2104.04034},
      archivePrefix={arXiv},
      primaryClass={cs.CY},
      url={https://arxiv.org/abs/2104.04034}, 
}

@book{cover2006elements,
	edition = {2},
	title = {Elements of {Information} {Theory}},

isbn = {9780471241959 9780471748823},
	url = {https://onlinelibrary.wiley.com/doi/book/10.1002/047174882X},
	language = {en},

publisher = {Wiley},
	author = {Cover, Thomas M. and Thomas, Joy A.},
	month = sep,
	year = {2006},
	doi = {10.1002/047174882X},
}

@article{willems1995ctw,
author = {Willems, F. M.J. and Shtarkov, Y. M. and Tjalkens, T. J.},
title = {The context-tree weighting method: basic properties},
year = {1995},
issue_date = {May 1995},
publisher = {IEEE Press},
volume = {41},
number = {3},
issn = {0018-9448},
url = {https://doi.org/10.1109/18.382012},
doi = {10.1109/18.382012},

journal = {IEEE Trans. Inf. Theor.},
month = may,
pages = {653--664},
numpages = {12}
}

@article{brier1950verification,
	title = {{VERIFICATION} {OF} {FORECASTS} {EXPRESSED} {IN} {TERMS} {OF} {PROBABILITY}},
	volume = {78},
	issn = {1520-0493, 0027-0644},
	url = {https://journals.ametsoc.org/view/journals/mwre/78/1/1520-0493_1950_078_0001_vofeit_2_0_co_2.xml},
	doi = {10.1175/1520-0493(1950)078<0001:VOFEIT>2.0.CO;2},

language = {EN},
	number = {1},

journal = {Monthly Weather Review},
	author = {Brier, Glenn W.},
	month = jan,
	year = {1950},
	pages = {1--3},
}

@article{gneiting2007strictly,
author = {Tilmann Gneiting and Adrian E Raftery},
title = {Strictly Proper Scoring Rules, Prediction, and Estimation},
journal = {Journal of the American Statistical Association},
volume = {102},
number = {477},
pages = {359--378},
year = {2007},
publisher = {Taylor \& Francis},
doi = {10.1198/016214506000001437},


URL = { 
    
        https://doi.org/10.1198/016214506000001437
    
    

},
eprint = { 
    
        https://doi.org/10.1198/016214506000001437
    
    

}

}

@inproceedings{liu2022pykt,
author = {Liu, Zitao and Liu, Qiongqiong and Chen, Jiahao and Huang, Shuyan and Tang, Jiliang and Luo, Weiqi},
title = {{pyKT}: a python library to benchmark deep learning based knowledge tracing models},
year = {2022},
isbn = {9781713871088},
publisher = {Curran Associates Inc.},
address = {Red Hook, NY, USA},

booktitle = {Proceedings of the 36th International Conference on Neural Information Processing Systems},
articleno = {1347},
numpages = {14},
location = {New Orleans, LA, USA},
series = {NIPS '22},
  doi = {10.52202/068431-1347},
  url = {https://proceedings.neurips.cc/paper_files/paper/2022/hash/75ca2b23d9794f02a92449af65a57556-Abstract-Datasets_and_Benchmarks.html}
}

@inproceedings{dktplus2018,
author = {Yeung, Chun-Kit and Yeung, Dit-Yan},
title = {Addressing two problems in deep knowledge tracing via prediction-consistent regularization},
year = {2018},
isbn = {9781450358866},
publisher = {Association for Computing Machinery},
address = {New York, NY, USA},
url = {https://doi.org/10.1145/3231644.3231647},
doi = {10.1145/3231644.3231647},

booktitle = {Proceedings of the Fifth Annual ACM Conference on Learning at Scale},
articleno = {5},
numpages = {10},

location = {London, United Kingdom},
series = {L@S '18}
}

@inproceedings{dktforget2019,
author = {Nagatani, Koki and Zhang, Qian and Sato, Masahiro and Chen, Yan-Ying and Chen, Francine and Ohkuma, Tomoko},
title = {Augmenting Knowledge Tracing by Considering Forgetting Behavior},
year = {2019},
isbn = {9781450366748},
publisher = {Association for Computing Machinery},
address = {New York, NY, USA},
url = {https://doi.org/10.1145/3308558.3313565},
doi = {10.1145/3308558.3313565},

booktitle = {The World Wide Web Conference},
pages = {3101--3107},
numpages = {7},

location = {San Francisco, CA, USA},
series = {WWW '19}
}

@inproceedings{kqn2019,
author = {Lee, Jinseok and Yeung, Dit-Yan},
title = {Knowledge Query Network for Knowledge Tracing: How Knowledge Interacts with Skills},
year = {2019},
isbn = {9781450362566},
publisher = {Association for Computing Machinery},
address = {New York, NY, USA},
url = {https://doi.org/10.1145/3303772.3303786},
doi = {10.1145/3303772.3303786},

booktitle = {Proceedings of the 9th International Conference on Learning Analytics \& Knowledge},
pages = {491--500},
numpages = {10},

location = {Tempe, AZ, USA},
series = {LAK19}
}

@inproceedings{saint2020,
author = {Choi, Youngduck and Lee, Youngnam and Cho, Junghyun and Baek, Jineon and Kim, Byungsoo and Cha, Yeongmin and Shin, Dongmin and Bae, Chan and Heo, Jaewe},
title = {Towards an Appropriate Query, Key, and Value Computation for Knowledge Tracing},
year = {2020},
isbn = {9781450379519},
publisher = {Association for Computing Machinery},
address = {New York, NY, USA},
url = {https://doi.org/10.1145/3386527.3405945},
doi = {10.1145/3386527.3405945},

booktitle = {Proceedings of the Seventh ACM Conference on Learning @ Scale},
pages = {341--344},
numpages = {4},

location = {Virtual Event, USA},
series = {L@S '20}
}

@inproceedings{atkt2021,
author = {Guo, Xiaopeng and Huang, Zhijie and Gao, Jie and Shang, Mingyu and Shu, Maojing and Sun, Jun},
title = {Enhancing Knowledge Tracing via Adversarial Training},
year = {2021},
isbn = {9781450386517},
publisher = {Association for Computing Machinery},
address = {New York, NY, USA},
url = {https://doi.org/10.1145/3474085.3475554},
doi = {10.1145/3474085.3475554},

booktitle = {Proceedings of the 29th ACM International Conference on Multimedia},
pages = {367--375},
numpages = {9},

location = {Virtual Event, China},
series = {MM '21}
}

@inproceedings{hawkes2021,
author = {Wang, Chenyang and Ma, Weizhi and Zhang, Min and Lv, Chuancheng and Wan, Fengyuan and Lin, Huijie and Tang, Taoran and Liu, Yiqun and Ma, Shaoping},
title = {Temporal Cross-Effects in Knowledge Tracing},
year = {2021},
isbn = {9781450382977},
publisher = {Association for Computing Machinery},
address = {New York, NY, USA},
url = {https://doi.org/10.1145/3437963.3441802},
doi = {10.1145/3437963.3441802},

booktitle = {Proceedings of the 14th ACM International Conference on Web Search and Data Mining},
pages = {517--525},
numpages = {9},

location = {Virtual Event, Israel},
series = {WSDM '21}
}

@inproceedings{dimkt2022,
author = {Shen, Shuanghong and Huang, Zhenya and Liu, Qi and Su, Yu and Wang, Shijin and Chen, Enhong},
title = {Assessing Student's Dynamic Knowledge State by Exploring the Question Difficulty Effect},
year = {2022},
isbn = {9781450387323},
publisher = {Association for Computing Machinery},
address = {New York, NY, USA},
url = {https://doi.org/10.1145/3477495.3531939},
doi = {10.1145/3477495.3531939},

booktitle = {Proceedings of the 45th International ACM SIGIR Conference on Research and Development in Information Retrieval},
pages = {427--437},
numpages = {11},

location = {Madrid, Spain},
series = {SIGIR '22}
}

@inproceedings{atdkt2023,
author = {Liu, Zitao and Liu, Qiongqiong and Chen, Jiahao and Huang, Shuyan and Gao, Boyu and Luo, Weiqi and Weng, Jian},
title = {Enhancing Deep Knowledge Tracing with Auxiliary Tasks},
year = {2023},
isbn = {9781450394161},
publisher = {Association for Computing Machinery},
address = {New York, NY, USA},
url = {https://doi.org/10.1145/3543507.3583866},
doi = {10.1145/3543507.3583866},

booktitle = {Proceedings of the ACM Web Conference 2023},
pages = {4178--4187},
numpages = {10},

location = {Austin, TX, USA},
series = {WWW '23}
}

@inproceedings{sparsekt2024,
author = {Huang, Shuyan and Liu, Zitao and Zhao, Xiangyu and Luo, Weiqi and Weng, Jian},
title = {Towards Robust Knowledge Tracing Models via k-Sparse Attention},
year = {2023},
isbn = {9781450394086},
publisher = {Association for Computing Machinery},
address = {New York, NY, USA},
url = {https://doi.org/10.1145/3539618.3592073},
doi = {10.1145/3539618.3592073},

booktitle = {Proceedings of the 46th International ACM SIGIR Conference on Research and Development in Information Retrieval},
pages = {2441--2445},
numpages = {5},

location = {Taipei, Taiwan},
series = {SIGIR '23}
}

@inproceedings{folibikt2023, series={CIKM '23},
   title={Forgetting-aware Linear Bias for Attentive Knowledge Tracing},
   url={http://dx.doi.org/10.1145/3583780.3615191},
   DOI={10.1145/3583780.3615191},
   booktitle={Proceedings of the 32nd ACM International Conference on Information and Knowledge Management},
   publisher={ACM},
   author={Im, Yoonjin and Choi, Eunseong and Kook, Heejin and Lee, Jongwuk},
   year={2023},
   month=Oct, pages={3958--3962},

}

@inproceedings{stablekt2024,
  title     = {Enhancing Length Generalization for Attention Based Knowledge Tracing Models with Linear Biases},
  author    = {Li, Xueyi and Bai, Youheng and Guo, Teng and Liu, Zitao and Huang, Yaying and Zhao, Xiangyu and Xia, Feng and Luo, Weiqi and Weng, Jian},
  booktitle = {Proceedings of the Thirty-Third International Joint Conference on
               Artificial Intelligence, {IJCAI-24}},
  publisher = {International Joint Conferences on Artificial Intelligence Organization},
  editor    = {Kate Larson},
  pages     = {5918--5926},
  year      = {2024},
  month     = {8},
  note      = {Main Track},
  doi       = {10.24963/ijcai.2024/654},
  url       = {https://doi.org/10.24963/ijcai.2024/654},
}

@incollection{extrakt2024,
	title = {Extending {Context} {Window} of {Attention} {Based} {Knowledge} {Tracing} {Models} via {Length} {Extrapolation}},

isbn = {9781643685489},
	url = {https://ebooks.iospress.nl/doi/10.3233/FAIA240651},

booktitle = {Frontiers in {Artificial} {Intelligence} and {Applications}},
	publisher = {IOS Press},
	author = {Li, Xueyi and Bai, Youheng and Guo, Teng and Zheng, Ying and Hou, Mingliang and Zhan, Bojun and Huang, Yaying and Liu, Zitao and Gao, Boyu and Luo, Weiqi},
	editor = {Endriss, Ulle and Melo, Francisco S. and Bach, Kerstin and Bugarín-Diz, Alberto and Alonso-Moral, José M. and Barro, Senén and Heintz, Fredrik},
	month = oct,
	year = {2024},
	doi = {10.3233/FAIA240651},
}


\appendix
\numberwithin{equation}{section}
\setcounter{fact}{0}
\renewcommand{\thefact}{\thesection.\arabic{fact}}

\section{Extended Theoretical Framework and Proofs}
\label{app:theory}

\subsection{Binary entropy properties and the LIU proxy}
For a binary correctness probability $p\in[0,1]$, the entropy in bits is
\begin{equation}
\label{eq:app-binary-entropy}
H_2(p)=-p\log_2 p-(1-p)\log_2(1-p).
\end{equation}
Its first two derivatives on $(0,1)$ are
\begin{equation}
\label{eq:app-binary-entropy-derivatives}
\frac{dH_2}{dp}(p)=\log_2\frac{1-p}{p},
\qquad
\frac{d^2H_2}{dp^2}(p)=-\frac{1}{(\ln 2)\,p(1-p)}<0.
\end{equation}
Equation~\eqref{eq:app-binary-entropy-derivatives} shows three properties that matter for the appendix. First, $H_2(p)$ is symmetric around $p=0.5$. Second, it is strictly increasing on $[0,0.5]$ and strictly decreasing on $[0.5,1]$. Third, it is strictly concave. Entropy measures the uncertainty of a predictive distribution; it is not a loss for evaluating whether a confident prediction is correct. As a result, $H_2(p)$ is small when the next response is causally predictable and maximal at one bit when the best available causal probability is near chance.

\begin{fact}[Local entropy-risk decomposition]
\label{fact:app-entropy-risk}
For any $\mathcal F_t$-measurable probabilistic predictor $\hat P_\theta$,
\begin{equation}
\label{eq:app-entropy-risk}
\mathbb E\big[-\log \hat P_\theta(R_{t+1}\mid\mathcal F_t)\big]
=H(R_{t+1}\mid\mathcal F_t)+\mathbb E\big[\mathrm{KL}(P_t\|\hat P_\theta)\big].
\end{equation}
\end{fact}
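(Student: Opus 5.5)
The plan is to prove the identity pointwise in the conditioning context and then average. Fix a realization of $\mathcal F_t$. At that context, $P_t(\cdot)=P(R_{t+1}=\cdot\mid\mathcal F_t)$ and $\hat P_\theta(\cdot)$ are both distributions on $\{0,1\}$. The second is $\mathcal F_t$-measurable, so it is a fixed distribution once we condition. By the tower property,
\begin{equation*}
\mathbb E\big[-\log \hat P_\theta(R_{t+1}\mid\mathcal F_t)\big]
=\mathbb E\Big[\mathbb E\big[-\log \hat P_\theta(R_{t+1}\mid\mathcal F_t)\,\big|\,\mathcal F_t\big]\Big],
\end{equation*}
and the inner conditional expectation equals $\sum_{r\in\{0,1\}} P_t(r)\bigl(-\log \hat P_\theta(r)\bigr)$. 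This is the cross-entropy of $\hat P_\theta$ under $P_t$.

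Next, add and subtract $\log P_t(r)$ inside the sum:
\begin{equation*}
\sum_r P_t(r)\bigl(-\log \hat P_\theta(r)\bigr)=\sum_r P_t(r)\bigl(-\log P_t(r)\bigr)+\sum_r P_t(r)\log\frac{P_t(r)}{\hat P_\theta(r)}
=H_2(P_t)+\mathrm{KL}(P_t\|\hat P_\theta).
\end{equation*}
Here $H_2$ is the binary entropy defined in \eqref{eq:app-binary-entropy}, and the logarithms are base $2$ as fixed in the main text. Taking the outer expectation, linearity gives $\mathbb E[H_2(P_t)]+\mathbb E[\mathrm{KL}(P_t\|\hat P_\theta)]$. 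The first term is, by definition, the conditional entropy $H(R_{t+1}\mid\mathcal F_t)$, which is the average of the local LIU. This yields \eqref{eq:app-entropy-risk}.

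The main obstacle is the boundary cases where some terms are infinite or undefined. Points with $P_t(r)=0$ contribute nothing, by the convention $0\log 0=0$ and $0\log(0/q)=0$. If $\hat P_\theta(r)=0$ while $P_t(r)>0$, both the left side and the KL term equal $+\infty$, so the identity holds in $[0,\infty]$. Splitting the sum is legitimate because the entropy term is finite: it lies in $[0,1]$ bits for binary outcomes. The KL term is nonnegative, so no $\infty-\infty$ form ever appears, and the outer expectations of nonnegative quantities are well defined. Finally, I would remark that measurability of $\hat P_\theta$ with respect to $\mathcal F_t$ is what makes $\hat P_\theta$ constant given the context. Without it, the inner expectation would not reduce to a cross-entropy between two fixed distributions.
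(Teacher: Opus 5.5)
Your proposal is correct and follows the same route as the paper: you condition on $\mathcal F_t$ to write the log-loss as an averaged cross-entropy under $P_t$, add and subtract $\log P_t(r)$, and identify the two pieces as $H(R_{t+1}\mid\mathcal F_t)$ and $\mathbb E[\mathrm{KL}(P_t\|\hat P_\theta)]$. Your handling of the zero-probability and infinite-KL boundary cases is extra care that the paper's proof omits, but it does not change the argument.
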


\begin{proof}
Let $P_t(r)=P(R_{t+1}=r\mid\mathcal F_t)$ for $r\in\{0,1\}$. Then
\begin{equation}
\label{eq:app-entropy-risk-step1}
\mathbb E\big[-\log \hat P_\theta(R_{t+1}\mid\mathcal F_t)\big]
=\mathbb E\Big[\sum_{r\in\{0,1\}}P_t(r)\big(-\log \hat P_\theta(r\mid\mathcal F_t)\big)\Big].
\end{equation}
Adding and subtracting $\log P_t(r)$ inside the sum yields
\begin{equation}
\label{eq:app-entropy-risk-step2}
\mathbb E\Big[\sum_r P_t(r)(-\log P_t(r))\Big]
+\mathbb E\Big[\sum_r P_t(r)\log\frac{P_t(r)}{\hat P_\theta(r\mid\mathcal F_t)}\Big].
\end{equation}
The first term in~\eqref{eq:app-entropy-risk-step2} is $H(R_{t+1}\mid\mathcal F_t)$ and the second term is $\mathbb E[\mathrm{KL}(P_t\|\hat P_\theta)]$, which proves~\eqref{eq:app-entropy-risk}.
\end{proof}

\begin{fact}[Binary Bayes error lower bound]
\label{fact:app-bayes-lower-bound}
Let $P_e^*(t)$ be the Bayes $0$-$1$ error for predicting $R_{t+1}$ from $\mathcal F_t$. Then
\begin{equation}
\label{eq:app-bayes-lower-bound}
P_e^*(t)\ge h_2^{-1}\big(H(R_{t+1}\mid\mathcal F_t)\big),
\end{equation}
where $h_2(p)=-p\log_2 p-(1-p)\log_2(1-p)$ and $h_2^{-1}$ denotes its inverse on $[0,1/2]$.
\end{fact}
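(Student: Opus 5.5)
We prove the bound pointwise in the context and then average, using concavity of $h_2$ (the binary entropy $H_2$ of \eqref{eq:app-binary-entropy}). Fix a realization of $\mathcal F_t$ and write $p_t=P_t(1)=P(R_{t+1}=1\mid\mathcal F_t)$.

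\textbf{Step 1: conditional Bayes error.} Given the context, the Bayes rule predicts $\arg\max_r P_t(r)$. Its conditional error is
\begin{equation*}
e(\mathcal F_t)=\min(p_t,1-p_t)\in[0,1/2].
\end{equation*}
The overall Bayes error is $P_e^*(t)=\mathbb E[e(\mathcal F_t)]$. Because $h_2$ is symmetric around $1/2$ (first property after \eqref{eq:app-binary-entropy-derivatives}), the local entropy satisfies
\begin{equation*}
H(P_t)=h_2(p_t)=h_2\big(e(\mathcal F_t)\big).
\end{equation*}
This is the binary, $|\mathcal R|=2$ case of Fano's inequality, holding with equality at the local level.

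\textbf{Step 2: averaging.} By definition of conditional entropy,
\begin{equation*}
H(R_{t+1}\mid\mathcal F_t)=\mathbb E\big[h_2(e(\mathcal F_t))\big].
\end{equation*}
Since $h_2$ is strictly concave (negative second derivative in \eqref{eq:app-binary-entropy-derivatives}), Jensen's inequality gives
\begin{equation*}
\mathbb E[h_2(e)]\le h_2(\mathbb E[e]).
\end{equation*}
Combining the two displays,
\begin{equation*}
H(R_{t+1}\mid\mathcal F_t)\le h_2\big(P_e^*(t)\big).
\end{equation*}

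\textbf{Step 3: inversion.} Since $e\in[0,1/2]$ pointwise, also $P_e^*(t)\in[0,1/2]$. On this interval $h_2$ is continuous and strictly increasing, mapping onto $[0,1]$. Hence $h_2^{-1}$ is well defined and increasing there, and applying it to both sides yields \eqref{eq:app-bayes-lower-bound}.

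The only delicate point is Step 3. We must confirm that $P_e^*(t)$ lies in the monotone branch $[0,1/2]$, so that inverting preserves the inequality direction; Step 1 guarantees this.

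We also need to make sure the averaging is over the same distribution of contexts in both $H(R_{t+1}\mid\mathcal F_t)$ and $P_e^*(t)$. This holds because both are defined as expectations under the joint law of $(X_t,q_{t+1},R_{t+1})$.

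For contexts with $p_t\in\{0,1\}$, we use the convention $0\log 0=0$, under which $h_2$ is continuous on $[0,1]$ and the argument goes through unchanged.
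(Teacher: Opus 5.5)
Your proof is correct and follows the same route as the paper: establish $H(R_{t+1}\mid\mathcal F_t)\le h_2(P_e^*(t))$, then invert $h_2$ on its increasing branch $[0,1/2]$ using $P_e^*(t)\in[0,1/2]$. The only difference is that the paper cites the conditional binary Fano inequality for the first step, whereas you derive it from the pointwise identity $h_2(p_t)=h_2(\min(p_t,1-p_t))$ and Jensen's inequality for the concave $h_2$; this makes the argument self-contained and shows that all the slack comes from averaging over contexts.
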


\begin{proof}
For binary targets, the conditional form of Fano's inequality gives
\begin{equation}
\label{eq:app-fano}
H(R_{t+1}\mid\mathcal F_t)\le h_2(P_e^*(t)).
\end{equation}
Because $P_e^*(t)\in[0,1/2]$ and $h_2$ is strictly increasing on that interval by~\eqref{eq:app-binary-entropy-derivatives}, applying $h_2^{-1}$ to~\eqref{eq:app-fano} yields~\eqref{eq:app-bayes-lower-bound}.
\end{proof}

These facts explain why the appendix uses
\begin{equation}
\label{eq:app-uctw}
\hat u_{\mathrm{ctw},t}=H_2(\hat p_{\mathrm{ctw},t})
\end{equation}
as the operational LIU proxy. The quantity in~\eqref{eq:app-uctw} depends only on the one-step-ahead Bernoulli law implied by the causal anchor, it is invariant to threshold choices, and it is directly aligned with the cross-entropy decomposition in~\eqref{eq:app-entropy-risk}.

\subsection{CTW algorithm details and cold-start behavior}
The real-data implementation conditions binary response predictions on past item-response tokens and a current-item query token, as described in Section~3.2. Item identity is therefore part of its context. For clarity, the equations and Algorithm~\ref{alg:ctw-appendix} below review classical binary-source CTW on $R=(r_1,\dots,r_T)$; they are an explanatory special case, not a specification of the item-token context builder used for the real benchmarks. Exact reconstruction of that builder requires the implementation and its input artifacts. Let $D$ be the maximum context depth and let $\lambda$ denote the empty context. For each binary context $s$ with $|s|\le D$, let $r^s$ be the subsequence of responses whose preceding suffix equals $s$. The Krichevsky-Trofimov block probability at node $s$ is
\begin{equation}
\label{eq:app-kt-block}
P_e^s(r^s)=\prod_{j=1}^{|r^s|}\frac{n_{s,j-1}^{\,r_j^s}+\frac{1}{2}}{n_{s,j-1}^{\,0}+n_{s,j-1}^{\,1}+1},
\end{equation}
where $n_{s,j-1}^{\,a}$ counts how many times symbol $a\in\{0,1\}$ has appeared in context $s$ before the $j$-th symbol of $r^s$.

The CTW mixture recursively combines the local KT model with the two child contexts:
\begin{equation}
\label{eq:app-ctw-recursion}
P_w^s(r^s)=
\begin{cases}
P_e^s(r^s), & |s|=D,\\[0.5ex]
\frac{1}{2}P_e^s(r^s)+\frac{1}{2}P_w^{0s}(r^{0s})P_w^{1s}(r^{1s}), & |s|<D.
\end{cases}
\end{equation}
The one-step predictive probability at the root is then
\begin{equation}
\label{eq:app-ctw-predictive}
\hat p_{\mathrm{ctw},t}
=\Pr_{\mathrm{CTW}}(R_{t+1}=1\mid r_{1:t})
=\frac{P_w^\lambda(r_{1:t}1)}{P_w^\lambda(r_{1:t}0)+P_w^\lambda(r_{1:t}1)}.
\end{equation}

\begin{algorithm}[t]
\caption{Classical binary-source CTW: an illustrative special case}
\label{alg:ctw-appendix}
\begin{algorithmic}[1]
\STATE {\bfseries Input:} Binary response stream $R=(r_1,\dots,r_T)$, maximum depth $D$
\STATE Initialize a depth-$D$ context tree with KT priors at every node
\FOR{$t=0$ {\bfseries to} $T-1$}
    \STATE Traverse the suffix path induced by the observed prefix $r_{1:t}$
    \STATE Compute the root mixture probability $\hat p_{\mathrm{ctw},t}$ by the CTW recursion in~\eqref{eq:app-ctw-recursion} and the predictive ratio in~\eqref{eq:app-ctw-predictive}
    \STATE Convert the anchor probability into uncertainty: $\hat u_{\mathrm{ctw},t}\leftarrow H_2(\hat p_{\mathrm{ctw},t})$
    \STATE Observe $r_{t+1}$ and update the node statistics on the same suffix path
\ENDFOR
\STATE {\bfseries Output:} Anchor probabilities $\{\hat p_{\mathrm{ctw},t}\}_{t=0}^{T-1}$ and uncertainty profile $\{\hat u_{\mathrm{ctw},t}\}_{t=0}^{T-1}$
\end{algorithmic}
\end{algorithm}

\paragraph{Cold-start behavior.}
At the beginning of a student sequence, most contexts are unseen. In that regime, the KT prior in~\eqref{eq:app-kt-block} keeps $\hat p_{\mathrm{ctw},t}$ close to $0.5$, so $\hat u_{\mathrm{ctw},t}$ in~\eqref{eq:app-uctw} stays close to one bit rather than becoming spuriously overconfident. As more evidence accumulates, longer contexts contribute through~\eqref{eq:app-ctw-recursion}. For stationary finite-depth tree sources, the classical CTW per-symbol redundancy decays as $O(\log t/t)$ for a fixed source class \citep{willems1995ctw}, so the cold-start penalty vanishes with sequence length. In this paper we use that result only as a property of the reference source model: student learning itself is not assumed stationary, but the reference guarantee clarifies why CTW is a stable causal anchor.

\subsection{Mutual-information decomposition into RU and IG}
For an item $q$, define the item-marginal entropy and the oracle conditional entropy by
\begin{equation}
\label{eq:app-item-entropies}
H_q^{\mathrm{marg}}:=H(R_{t+1}\mid q_{t+1}=q),
\qquad
H_q^{\mathrm{cond}}:=H(R_{t+1}\mid X_t,q_{t+1}=q).
\end{equation}
The theoretical remaining uncertainty and information gain are
\begin{equation}
\label{eq:app-ru-ig-theoretical}
\mathrm{RU}^\star(q):=H_q^{\mathrm{cond}},
\qquad
\mathrm{IG}^\star(q):=H_q^{\mathrm{marg}}-H_q^{\mathrm{cond}}.
\end{equation}

\begin{fact}[RU/IG identity]
\label{fact:app-ru-ig}
For every item $q$,
\begin{equation}
\label{eq:app-ru-ig-identity}
H_q^{\mathrm{marg}}=\mathrm{RU}^\star(q)+\mathrm{IG}^\star(q),
\qquad
\mathrm{IG}^\star(q)=I(R_{t+1};X_t\mid q_{t+1}=q)\ge 0.
\end{equation}
\end{fact}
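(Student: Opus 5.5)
The identity half follows from the definitions in~\eqref{eq:app-ru-ig-theoretical}: adding $\mathrm{RU}^\star(q)=H_q^{\mathrm{cond}}$ and $\mathrm{IG}^\star(q)=H_q^{\mathrm{marg}}-H_q^{\mathrm{cond}}$ returns $H_q^{\mathrm{marg}}$. The real content is the identification of $\mathrm{IG}^\star(q)$ with a conditional mutual information and its nonnegativity.

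\textbf{Step 1: fix the conditional law.} I will work under the conditional law $P_q(\cdot):=P(\cdot\mid q_{t+1}=q)$, assuming $P(q_{t+1}=q)>0$ so this is well defined. Under $P_q$, the quantity $H_q^{\mathrm{marg}}$ is the entropy $H_{P_q}(R_{t+1})$. Likewise $H_q^{\mathrm{cond}}$ is the conditional entropy $H_{P_q}(R_{t+1}\mid X_t)$, that is, $\mathbb E_{P_q}\big[H_2(P(R_{t+1}=1\mid X_t,q_{t+1}=q))\big]$.

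\textbf{Step 2: the mutual-information identity.} I will use the definition $I(R;X\mid q)=\mathbb E_{P_q}\!\left[\log\frac{P_q(R\mid X)}{P_q(R)}\right]$. Splitting the logarithm gives $-\mathbb E\log P_q(R)+\mathbb E\log P_q(R\mid X)$, which equals $H_q^{\mathrm{marg}}-H_q^{\mathrm{cond}}=\mathrm{IG}^\star(q)$. This is the same add-and-subtract bookkeeping used in the proof of Fact~\ref{fact:app-entropy-risk}.

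\textbf{Step 3: nonnegativity.} I will rewrite the mutual information as an averaged KL divergence:
\[
I(R_{t+1};X_t\mid q_{t+1}=q)=\mathbb E_{P_q}\big[\mathrm{KL}\big(P_q(R_{t+1}\mid X_t)\,\|\,P_q(R_{t+1})\big)\big].
\]
Each KL term is nonnegative by Gibbs' inequality, which follows from Jensen applied to the concave function $\log$. Hence $\mathrm{IG}^\star(q)\ge 0$.

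As an alternative, nonnegativity can be obtained from strict concavity of $H_2$, shown in~\eqref{eq:app-binary-entropy-derivatives}. Since $P_q(R=1)=\mathbb E_{P_q}[P_q(R=1\mid X)]$, Jensen gives $H_2(\mathbb E p)\ge \mathbb E H_2(p)$. This is exactly $H_q^{\mathrm{marg}}\ge H_q^{\mathrm{cond}}$.

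\textbf{Main obstacle.} The main difficulty is measure-theoretic rather than computational. $X_t$ ranges over histories of variable length $t$, and $t$ itself varies across prediction points. So "$H(R_{t+1}\mid q_{t+1}=q)$" has to be read as a mixture over prediction contexts, with $t$ either treated as random and absorbed into $X_t$ or held fixed.

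I will handle this by defining both entropies under one joint law of $(X_t,R_{t+1})$ given $q_{t+1}=q$. The history space is countable, since items and responses are discrete, so all sums converge. Binary entropy is bounded by one bit, so every expectation is finite and no integrability issue arises. The decomposition and the Jensen step then go through unchanged.
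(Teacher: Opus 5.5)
Your proof is correct and follows the paper's route. The identity $\mathrm{RU}^\star(q)+\mathrm{IG}^\star(q)=H_q^{\mathrm{marg}}$ is the definitional rearrangement, and $\mathrm{IG}^\star(q)$ is the entropy-difference expansion of the conditional mutual information. You differ only in detail: you start from the log-ratio definition, prove nonnegativity explicitly (via the averaged-KL form or Jensen on the concave $H_2$), and address well-definedness over variable-length histories, whereas the paper takes the entropy difference as the definition and cites nonnegativity as a standard property.
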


\begin{proof}
By the definition of conditional mutual information,
\begin{equation}
\label{eq:app-ru-ig-proof}
I(R_{t+1};X_t\mid q_{t+1}=q)
=H(R_{t+1}\mid q_{t+1}=q)-H(R_{t+1}\mid X_t,q_{t+1}=q).
\end{equation}
Substituting~\eqref{eq:app-item-entropies} into~\eqref{eq:app-ru-ig-proof} gives the second identity in~\eqref{eq:app-ru-ig-identity}; the first follows by rearrangement, and non-negativity is a standard property of mutual information.
\end{proof}

The paper uses CTW to obtain operational plug-in diagnostics:
\begin{equation}
\label{eq:app-ru-ig-operational}
\mathrm{RU}(q):=H_q^{\mathrm{ctw}},
\qquad
\mathrm{IG}(q):=H_q^{\mathrm{marg}}-H_q^{\mathrm{ctw}}.
\end{equation}
Because~\eqref{eq:app-ru-ig-operational} replaces the oracle conditional entropy with an estimated one, negative $\mathrm{IG}(q)$ values may appear because of finite-sample estimation or anchor misspecification; the non-negativity guarantee applies to the oracle mutual information, not to this plug-in diagnostic.

\paragraph{Venn-style interpretation.}
It is useful to read~\eqref{eq:app-ru-ig-identity} as a verbal Venn diagram. The full circle is the item-marginal entropy $H_q^{\mathrm{marg}}$. The portion explained by student history is $\mathrm{IG}^\star(q)$, while the unexplained remainder is $\mathrm{RU}^\star(q)$. High IG therefore means that historical context removes a large fraction of the original uncertainty; high RU means that substantial uncertainty remains even after conditioning on the observable history.

\paragraph{Interpretation guide.}
If RU is high and IG is low, the item is a warning case: the item-conditioned anchor leaves high uncertainty relative to item-marginal entropy. This motivates checking guessing, item ambiguity, label quality, missing variables, and anchor misspecification; it does not prove intrinsic randomness. If IG is high, the benchmark still contains exploitable contextual structure, so stronger temporal, prerequisite, or item-context modeling is warranted.


\section{Data and Preprocessing Details}
\label{app:data}

\subsection{Dataset coverage}
Table~\ref{tab:app-dataset-coverage} summarizes the dataset coverage used in the appendix. The highest-band mass refers to the CTW-entropy band $[0.8,1.0]$ used in the main benchmark-diagnosis figures.

\begin{table}[t]
\centering
\caption{Dataset coverage used by the appendix. Support-interaction counts are reported for NIPS Task 3/4 and Algebra 2005.}
\label{tab:app-dataset-coverage}
\footnotesize
\begin{tabular}{@{}L{0.15\linewidth}L{0.15\linewidth}L{0.18\linewidth}L{0.24\linewidth}L{0.14\linewidth}@{}}
\toprule
Dataset & Protocol & Support interactions & Aligned evaluation points & Highest-band mass \\
\midrule
NIPS Task 3/4 & question-level & $898{,}674$ & $1{,}354{,}735$ & $74.8\%$ \\
Algebra 2005 & question-level & $598{,}954$ & $464{,}725$ & $55.6\%$ \\
ASSIST2015 & $concept=question$ & -- & $672{,}960$ for CTW+NN/KT; $673{,}200$ for CTW before KT-side intersection & -- \\
\bottomrule
\end{tabular}
\end{table}

\subsection{Preprocessing protocol and alignment rules}
The main benchmark-diagnosis figures use question-level preprocessing for NIPS Task 3/4 and Algebra 2005. ASSIST2015 is reported as an auxiliary consistency check under the convention $concept=question$, which makes the prediction unit compatible with the question-level protocol used by the two primary benchmarks. All reported comparisons use five-fold aligned held-out points: CTW-side anchor probabilities and task-facing KT predictions are joined only when they share the same fold, student/history position, item identifier, and held-out response.

The diagnostic stage operates on binary correctness labels and does not introduce a separate appendix-specific prediction task beyond the benchmark sequence protocol used by the main paper.

\subsection{Knowledge-component mapping and Q-matrix status}
The CTW anchor itself does not require a Q-matrix. It uses historical item-response tokens and the next queried item's token as predictor inputs, in addition to using item identifiers for prediction alignment and item-level diagnostics. Prediction precedes the current-response update. The appendix does not introduce an additional inferred KC mapping layer.


\section{Implementation and Reproducibility}
\label{app:repro}

\subsection{Implementation settings}
Table~\ref{tab:app-implementation-settings} collects the implementation settings used by the appendix.

\begin{table}[t]
\centering
\caption{Implementation settings used by the appendix.}
\label{tab:app-implementation-settings}
\footnotesize
\begin{tabular}{@{}L{0.32\linewidth}L{0.58\linewidth}@{}}
\toprule
Component & Setting \\
\midrule
CTW anchor & Item-response token stream (past \texttt{I:item|R:resp} tokens plus the current \texttt{Q:item} query token), maximum context depth $D=6$, KT priors, causal one-step-ahead prediction. The token budget includes one query token and at most $D-1=5$ historical interaction tokens (effective response depth $5$). \\
Uncertainty bands & Five fixed CTW-entropy bands: $[0.0,0.2)$, $[0.2,0.4)$, $[0.4,0.6)$, $[0.6,0.8)$, $[0.8,1.0]$. \\
CTW+NN probe & Fused logit correction $\sigma(\operatorname{logit}(\hat p_{\mathrm{ctw},t})+f_\phi(X_{\le t},Q_{t+1}))$ used only as an anchor-side neural probe. \\
Synthetic sanity check & $64$ trials, sequence length $400$, one-step warmup, CTW depth $6$. \\
Item diagnostics & Minimum support $50$ per item; diagnostic guide lines at $RU=0.8$ and $IG=0.1$. \\
\bottomrule
\end{tabular}
\end{table}

\subsection{Training and evaluation details}
All global metrics and local diagnostic summaries are reported as mean $\pm$ standard deviation over five folds. The CTW anchor and the derived bucketization are always computed out-of-fold. The shared aligned subset contains $1{,}354{,}735$ points on NIPS Task 3/4, $464{,}725$ points on Algebra 2005, and $672{,}960$ points on ASSIST2015 under the convention $concept=question$.

The archived run exports used for the retained KT comparison table preserve fold-specific hyperparameter fields for each task-facing baseline. Table~\ref{tab:app-recovered-hparams} summarizes representative recovered settings for the main baselines discussed in the paper. Each entry reports the set of values that appears across the five retained folds for that dataset, so multi-valued cells indicate genuine fold-level variation rather than post hoc averaging.

\begin{table*}[t]
\centering
\caption{Recovered fold-specific hyperparameter summaries for representative KT baselines in the archived run exports. Each cell reports the values appearing across the five retained folds for that dataset. Abbreviations: $d$ for hidden width, $d_{ff}$ for feed-forward width, heads for attention heads, $B$ for transformer blocks, and mem for memory size.}
\label{tab:app-recovered-hparams}
\footnotesize
\setlength{\tabcolsep}{3pt}
\begin{tabular}{@{}L{0.10\linewidth}L{0.28\linewidth}L{0.28\linewidth}L{0.28\linewidth}@{}}
\toprule
Model & NIPS Task 3/4 & Algebra 2005 & ASSIST2015 \\
\midrule
DKT & emb $256$, drop $0.3/0.5$, lr $10^{-3}$ & emb $256$, drop $0.1/0.3/0.5$, lr $10^{-3}$ & emb $256$, drop $0.05/0.3/0.5$, lr $10^{-3}$ \\
AKT & $d=64$, $d_{ff}=64/256$, heads $4/8$, $B=2/4$, drop $0.1/0.3$, lr $10^{-3}$ & $d=64/256$, $d_{ff}=256$, heads $4$, $B=4$, drop $0.05/0.1$, lr $10^{-4}$ & $d=64/256$, $d_{ff}=64/256$, heads $4/8$, $B=4$, drop $0.05/0.1$, lr $10^{-4}/10^{-5}$ \\
simpleKT & $d=64$, $d_{ff}=256$, layers $2$, $B=4$, drop $0.3$, lr $10^{-3}$ & $d=256$, $d_{ff}=64/256$, layers $2$, $B=4$, drop $0.05/0.1$, lr $10^{-4}/10^{-5}$ & $d=256$, $d_{ff}=256$, layers $2$, $B=4$, drop $0.05/0.1$, lr $10^{-4}/10^{-5}$ \\
stableKT & $d=64$, $d_{ff}=64/256$, layers $2$, heads $4$, drop $0.3$, lr $10^{-3}$, $r=0.1/0.7/0.9/1.0$, $\gamma=0.7/0.9/1.0/5.0$ & $d=256$, $d_{ff}=64/256$, layers $2$, heads $4$, drop $0.05/0.1$, lr $10^{-4}/10^{-5}$, $r=0.5/0.9/5.0$, $\gamma=0.1/0.3/0.5/5.0$ & $d=256$, $d_{ff}=64/256$, layers $2$, heads $4$, drop $0.05/0.1$, lr $10^{-5}$, $r=0.5/1.0/2.0$, $\gamma=0.1/0.5/0.7$ \\
Deep-IRT & dim$_s=256$, mem $32/64$, drop $0.05/0.3$, lr $10^{-3}$ & dim$_s=256$, mem $32$, drop $0.05/0.1$, lr $10^{-3}$ & dim$_s=64$, mem $32$, drop $0.05/0.1$, lr $10^{-3}$ \\
ATKT & skill $64$, ans $64/256$, hidden $256$, attn $64/256$, drop $0.05/0.1$, lr $10^{-4}/10^{-3}$ & skill $64$, ans $64$, hidden $256$, attn $256$, drop $0.05/0.1/0.5$, lr $10^{-3}$ & skill $64$, ans $64/256$, hidden $256$, attn $64/256$, drop $0.05$, lr $10^{-4}/10^{-3}$ \\
\bottomrule
\end{tabular}
\end{table*}

The same archive also preserves explicit training-length metadata for DIMKT on the two main benchmark datasets: batch size $64$, num\_steps $199$, learning rate $10^{-3}$, and weight decay $10^{-4}$ on both NIPS Task 3/4 and Algebra 2005, with dropout varying over $\{0.05,0.1,0.3\}$. We did not find optimizer, early-stopping, or wall-clock fields in the retained run exports, so those quantities are not supplied for the original baseline branch. The separately reported CTW+NN-SKT branch has validation-NLL selection and early-stopping settings described in Appendix~\ref{app:robustness}.

\subsection{Compute-cost discussion}
The CTW anchor is intentionally lightweight. For a fixed depth $D$, each prediction/update step touches only the suffix path of length at most $D$, so the per-step symbolic overhead is linear in $D$ and small in practice. In our experiments we use $D=6$, which is sufficient to separate the i.i.d. and persistent synthetic regimes while keeping the causal anchor inexpensive. This is one of the main practical advantages of the framework: the uncertainty analysis adds a modest symbolic pass on top of predictions that are already being generated for KT evaluation, whereas the expensive part of the pipeline remains the training of the task-facing neural KT models.


\section{Additional Experimental Results}
\label{app:additional}

\subsection{Cross-benchmark diagnostic differences}
Table~\ref{tab:app-cross-benchmark} summarizes the strongest cross-benchmark contrasts reported in the paper. The main point is not only that the high-entropy region exists, but that it carries different mass and different residual headroom across datasets.

\begin{table}[t]
\centering
\caption{Cross-benchmark diagnostic contrasts. Positive $\Delta$ACC means CTW+NN improves over CTW; negative $\Delta$Brier means the proper score improves.}
\label{tab:app-cross-benchmark}
\footnotesize
\begin{tabular}{@{}L{0.16\linewidth}L{0.15\linewidth}L{0.15\linewidth}L{0.18\linewidth}L{0.22\linewidth}@{}}
\toprule
Dataset & Highest CTW band mass & $\Delta$ACC in highest CTW band & $\Delta$Brier in highest CTW band & Refined-coordinate note \\
\midrule
NIPS Task 3/4 & $74.8\%$ & $+0.0886$ & $-0.0378$ & Under refined repartitioning, the peak CTW+NN gain shifts to the upper-middle bucket at $+0.0868$. \\
Algebra 2005 & $55.6\%$ & $+0.0283$ & $-0.0681$ & Under refined repartitioning, the largest CTW+NN gain appears in the fourth bucket at $+0.0335$, while gains remain positive in the highest bucket. \\
\bottomrule
\end{tabular}
\end{table}

The table reinforces the same diagnosis from two angles. On NIPS Task 3/4, the highest-entropy region dominates the benchmark mass and still admits the largest accuracy gain over CTW, showing residual predictive structure relative to this anchor in its difficult regime. On Algebra 2005, the hard region is smaller but still substantial, and the largest proper-score improvement also appears there. Thus RU/IG and band-level gains should be read together: high entropy can still hide exploitable structure, but the gain pattern differs by dataset.


\clearpage
\section{Robustness to Anchor Information, Band Resolution, and Cold Start}
\label{app:robustness}

\subsection{Analysis branches and reporting conventions}
The original reported Table 2 branch is identified as a single SimpleKT residual export, \texttt{simplekt\_residual\_ctw\_local\_bayes}, with the aligned subsets specified in the main text. The additional analysis in this appendix uses a single SimpleKT residual probe, CTW+NN-SKT, and its common evaluation cohort. These are separate analysis branches: differences between their point estimates should not be interpreted as reruns of an identical predictor or cohort. Within each additional comparison, evaluated prediction rows are held fixed across the reference and probe. Equal-frequency CTW quintiles contain approximately $271{,}804$ NIPS predictions or $93{,}171$ Algebra predictions per quintile. These approximate counts are not a replacement for the original aligned-subset totals.

All deltas are model minus reference on matched predictions. Positive $\Delta$ACC and negative $\Delta$Brier or $\Delta$Log-loss indicate improvement. Additional log-loss values use natural logarithms and are reported in nats, whereas uncertainty coordinates are in bits. Dividing log-loss by $\ln 2$ converts it to bits without changing comparisons. The theoretical decomposition applies to log-loss; Brier, accuracy, and calibration are complementary empirical checks.

\subsection{Independent item-aware causal references}
ItemMean is an out-of-fold empirical-Bayes reference. With $c_q$ correct responses and $n_q$ observations for item $q$ in the four training folds, training-fold global correctness $\bar p$, and prior strength $\alpha$, it predicts
\begin{equation}
\hat p_{\mathrm{item}}(q)=\frac{c_q+\alpha\bar p}{n_q+\alpha}.
\end{equation}
The default $\alpha=20$ supplies 20 global-rate pseudo-observations; unseen items fall back to $\bar p$. Held-out labels are excluded from these item counts.

ItemRasch adds a one-dimensional causal online student state to the item logit. It is a 1PL-style online reference, not a jointly fitted IRT model. For each sequence sorted by prediction position, initialize $\theta_0=0$ and precision $\lambda_0=1$. Before observing response $r_t$, compute
\begin{equation}
b_t=\operatorname{logit}(\hat p_{\mathrm{item}}(q_t)),\qquad
p_t=\sigma(b_t+\theta_{t-1}).
\end{equation}
After observing the response, update
\begin{equation}
\lambda_t=\lambda_{t-1}+\max\{p_t(1-p_t),10^{-6}\},\qquad
\theta_t=\operatorname{clip}\!\left(\theta_{t-1}+\frac{r_t-p_t}{\lambda_t},-4,4\right).
\end{equation}
The item-prior probability is clipped to $[10^{-12},1-10^{-12}]$ before taking its logit. The argument passed to the sigmoid is clipped to $[-30,30]$; there is no additional $10^{-12}$ clamp after the sigmoid, whose extreme outputs are approximately $9.36\times10^{-14}$ and $1-9.36\times10^{-14}$. The reference has no neural parameters, offline student-ability fitting, optimizer, batch size, epochs, or validation-based selection. Its effective update step is $1/\lambda_t$.

\begin{table}[htbp]
\centering
\caption{Independent reference scores on the additional common cohort. Lower is better; log-loss is in nats.}
\label{tab:strong-reference}
\begin{tabular}{llcc}
\toprule
Dataset & Reference & Brier & Log-loss \\
\midrule
NIPS Task 3/4 & CTW & 0.2111 & 0.6096 \\
 & ItemRasch & 0.1923 & 0.5649 \\
Algebra 2005 & CTW & 0.1684 & 0.5016 \\
 & ItemRasch & 0.1585 & 0.4853 \\
\bottomrule
\end{tabular}
\end{table}

\subsection{Highest-band gains and uncertainty estimates}
Table~\ref{tab:robust-highband} reports five-fold mean gains with 95\% $t$ intervals. The ItemRasch rows use ItemRasch entropy to define the band and ItemRasch as the reference; they therefore test localization under an alternative item-and-ability reference, rather than adding item identity to a response-only CTW reference. AKT, simpleKT, and stableKT also improve accuracy, Brier, and log-loss in the highest ItemRasch-entropy band on both datasets.

\begin{table}[htbp]
\centering
\caption{CTW+NN-SKT gains in the highest-entropy band, $[0.8,1.0]$, on each reference's own coordinate. Entries are five-fold means [95\% $t$ intervals]; log-loss is in nats.}
\label{tab:robust-highband}
\small
\begin{tabular}{@{}llrrr@{}}
\toprule
Dataset & Reference & $\Delta$ACC & $\Delta$Brier & $\Delta$Log-loss \\
\midrule
NIPS & CTW & $+0.0905$ & $-0.0388$ & $-0.0902$ \\
 & & $[0.0898,0.0913]$ & $[-0.0389,-0.0386]$ & $[-0.0906,-0.0899]$ \\
Algebra & CTW & $+0.0285$ & $-0.0685$ & $-0.1578$ \\
 & & $[0.0252,0.0317]$ & $[-0.0701,-0.0669]$ & $[-0.1624,-0.1533]$ \\
NIPS & ItemRasch & $+0.0420$ & $-0.0175$ & $-0.0394$ \\
 & & $[0.0411,0.0429]$ & $[-0.0178,-0.0172]$ & $[-0.0400,-0.0388]$ \\
Algebra & ItemRasch & $+0.0788$ & $-0.0430$ & $-0.1009$ \\
 & & $[0.0765,0.0812]$ & $[-0.0450,-0.0411]$ & $[-0.1065,-0.0952]$ \\
\bottomrule
\end{tabular}
\end{table}

For the CTW coordinate, two-sided paired tests over folds compare the highest and lowest bands. For $\Delta$ACC, $p=1.16\times10^{-8}$ on NIPS and $p=1.71\times10^{-5}$ on Algebra; for $\Delta$Brier, $p=2.83\times10^{-6}$ and $p=2.92\times10^{-8}$; for $\Delta$Log-loss, $p=7.77\times10^{-4}$ and $p=2.89\times10^{-8}$. Under the ItemRasch coordinate, high-versus-low differences are significant for all three metrics, with $p\le4.05\times10^{-5}$ on NIPS and $p\le4.78\times10^{-6}$ on Algebra. These are fold-level descriptive comparisons rather than guarantees for new populations; the folds share training data.

Within-band expected calibration error (ECE) uses 15 fixed probability bins over $[0,1]$. In the highest CTW-entropy band, CTW+NN-SKT has ECE $0.0058$ on NIPS and $0.0264$ on Algebra. The corresponding ECE changes are $-0.0043$ [$-0.0056,-0.0030$] and $-0.1496$ [$-0.1565,-0.1428$]. ECE depends on binning and is interpreted alongside Brier and log-loss.

\subsection{Equal-frequency resolution check}
Table~\ref{tab:quintiles} resolves the large fixed-width high-entropy population with balanced CTW quintiles. On NIPS, accuracy gain increases across quintiles; on Algebra, accuracy gains concentrate in the upper two quintiles, while the largest log-loss reduction appears in the hardest quintile. Thus the evidence supports non-uniform localization rather than a universal monotonic accuracy curve.

\begin{table}[htbp]
\centering
\caption{CTW+NN-SKT minus CTW on equal-frequency CTW-entropy quintiles. Q1 has the lowest entropy and Q5 the highest; log-loss is in nats.}
\label{tab:quintiles}
\small
\begin{tabular}{llrrrrr}
\toprule
Dataset & Metric & Q1 & Q2 & Q3 & Q4 & Q5 \\
\midrule
NIPS & $\Delta$ACC & $+0.0094$ & $+0.0301$ & $+0.0637$ & $+0.0926$ & $+0.1570$ \\
 & $\Delta$Log-loss & $-0.0665$ & $-0.0705$ & $-0.0806$ & $-0.0816$ & $-0.1227$ \\
Algebra & $\Delta$ACC & $-0.0001$ & $+0.0011$ & $+0.0157$ & $+0.0433$ & $+0.0206$ \\
 & $\Delta$Log-loss & $-0.0087$ & $-0.0176$ & $-0.0404$ & $-0.1443$ & $-0.2590$ \\
\bottomrule
\end{tabular}
\end{table}

Using independent ItemRasch quintiles gives the same directional check: CTW+NN-SKT's accuracy gain over ItemRasch rises from $+0.0034$ to $+0.0853$ across NIPS quintiles and from $+0.0012$ to $+0.1036$ across Algebra quintiles. Brier and log-loss improve in every ItemRasch quintile.

\subsection{Reference-parameter sensitivity}
We vary item prior strength over $\{1,5,20,100,200\}$ and initial Rasch precision over $\{0.25,0.5,1,2,4\}$. Across these 25 configurations, CTW+NN-SKT remains favorable over ItemRasch in the highest ItemRasch-entropy band. On NIPS, $\Delta$ACC ranges from $+0.0410$ to $+0.0449$, $\Delta$Brier from $-0.0191$ to $-0.0174$, and $\Delta$Log-loss from $-0.0432$ to $-0.0391$. On Algebra, the corresponding ranges are $+0.0747$ to $+0.0990$, $-0.0553$ to $-0.0411$, and $-0.1332$ to $-0.0952$. This sensitivity check addresses two reference settings, not every possible uncertainty estimator.

\subsection{Interaction position, item support, and RU/IG regions}
The additional common cohort's highest CTW-entropy band contains $1{,}015{,}423$ NIPS predictions and $258{,}740$ Algebra predictions. Only $6.61\%$ and $2.38\%$, respectively, are in the first 20 interactions. Equivalently, $93.39\%$ and $97.62\%$ occur after position 20; median positions are 168 and 600. High ItemRasch entropy is similarly persistent, with $92.85\%$ and $97.81\%$ after position 20. The high-entropy region is therefore not dominated by initial CTW cold start.

Zero training-item support accounts for $1.22\%$ of the NIPS test intersection and $47.33\%$ of the Algebra test intersection. Within the high ItemRasch-entropy band of that slice, CTW+NN-SKT improves over ItemRasch by $+0.1056$ ACC, $-0.0353$ Brier, and $-0.0757$ log-loss on NIPS, and by $+0.1165$, $-0.0609$, and $-0.1399$ on Algebra. These are support-slice checks, not a full out-of-distribution generalization claim.

For model diagnosis, the additional NIPS high-IG region contains $250{,}315$ predictions. stableKT improves over CTW by $+0.0638$ ACC [$0.0628,0.0649$], $-0.0325$ Brier [$-0.0328,-0.0321$], and $-0.0843$ log-loss [$-0.0857,-0.0829$]. This supported example complements the original scatter plots; it does not supply inferential estimates for every plotted cell.

\subsection{Single-probe architecture and training protocol}
CTW+NN-SKT adds a SimpleKT residual to the CTW logit. The strict checkpoints use 2 or 4 SimpleKT blocks, four attention heads, and feed-forward width 256. Model selection uses validation negative log-likelihood, with at most 50 epochs and early-stopping patience 4. NIPS settings use hidden width 64 or 128, dropout 0.10 or 0.20, learning rate $5\times10^{-4}$ or $10^{-3}$, and batch size 64 or 128. Algebra settings use hidden width 128 or 256, dropout 0.05, 0.10, or 0.20, learning rate $10^{-4}$, and batch size 64 or 128.

These reported ranges document the additional branch but do not identify every selected dataset-fold configuration. Complete selected configurations, seeds, best epochs, validation NLL values, and optimizer metadata are required for exact retraining; they are not available in the provided archive. The archived aggregate results and the reported additional comparisons support the diagnostic analysis, while exact retraining remains a reproducibility limitation.

\section*{Data and code availability}
The study uses the existing educational benchmarks cited in the paper. Residual-probe training entry points are available in the supplementary material of the accepted OpenReview record at \url{https://openreview.net/forum?id=qISxwVTG3L}. Benchmark access remains subject to the original providers' terms.

\section*{Responsible use and writing assistance}
This study analyzes existing educational benchmarks and does not collect new human-subject data. The diagnostic scores should guide benchmark inspection and model evaluation; they should not be used alone to make decisions about individual students. High anchor entropy can reflect missing variables or model mismatch rather than an intrinsic limitation of a learner. Data access and reuse remain subject to each original provider's terms. LLM tools were used for writing, editing, and formatting assistance.


\end{document}